\algnewcommand{\LComment}[1]{\Statex \textcolor{gray}{\(\triangleright\) \textit{#1}}}
\newcolumntype{C}[1]{>{\centering\arraybackslash}p{#1}}
\theoremstyle{thmstyleone}%
\newtheorem{theorem}{Theorem}%  meant for continuous numbers
\newtheorem{proposition}[theorem]{Proposition}% 
\newtheorem{assumption}{Assumption}
\theoremstyle{thmstyletwo}%
\theoremstyle{thmstylethree}%
\newtheorem{definition}{Definition}%
\newcommand{\phantomfill}[2]{{\mathpalette\mask@{{#1}{#2}}}}
\newcommand{\mask@}[2]{\mask@@{#1}#2}
\newcommand{\mask@@}[3]{%
  \settowidth{\dimen@}{$\m@th#1#2$}%
  \makebox[\dimen@]{$\m@th#1#3$}%
}
\newcommand{\eg}{\textit{e.g.}}
\newcommand{\ie}{\textit{i.e.}}
\newcommand{\methodname}{\textrm{NPC}}
\newcommand{\tick}{\ding{51}} % Tick
\newcommand{\cross}{\ding{55}} % Cross
\begin{document}

\title{Neural Probabilistic Circuits: Enabling Compositional and Interpretable Predictions through Logical Reasoning}
\date{}

\author[1]{Weixin Chen\thanks{Equal contributions.}}
\author[1]{Simon Yu$^*$}
\author[2]{Huajie Shao}
\author[1]{Lui Sha}
\author[1]{Han Zhao\thanks{Corresponding author.}}

\affil[1]{University of Illinois Urbana-Champaign}
\affil[2]{College of William and Mary}

\affil[ ]{\protect\\{\small\texttt{\{\href{mailto:weixinc2@illinois.edu}{weixinc2},\href{mailto:jundayu2@illinois.edu}{jundayu2},\href{mailto:lrs@illinois.edu}{lrs},\href{mailto:hanzhao@illinois.edu}{hanzhao}\}@illinois.edu, \{\href{mailto:hshao@wm.edu}{hshao}\}@wm.edu}}}

\maketitle

\abstract{
End-to-end deep neural networks have achieved remarkable success across various domains but are often criticized for their lack of interpretability. While post hoc explanation methods attempt to address this issue, they often fail to accurately represent these black-box models, resulting in misleading or incomplete explanations. To overcome these challenges, we propose an inherently transparent model architecture called Neural Probabilistic Circuits (\methodname s), which enable compositional and interpretable predictions through logical reasoning. In particular, an \methodname~consists of two modules: an attribute recognition model, which predicts probabilities for various attributes, and a task predictor built on a probabilistic circuit, which enables logical reasoning over recognized attributes to make class predictions. To train \methodname s, we introduce a three-stage training algorithm comprising attribute recognition, circuit construction, and joint optimization. Moreover, we theoretically demonstrate that an \methodname 's error is upper-bounded by a linear combination of the errors from its modules. To further demonstrate the interpretability of \methodname, we provide both the most probable explanations and the counterfactual explanations. Empirical results on four benchmark datasets show that \methodname s strike a balance between interpretability and performance, achieving results competitive even with those of end-to-end black-box models while providing enhanced interpretability.

}

\section{Introduction}\label{sec:intro}
End-to-end deep neural networks (DNNs)~\citep{alexnet, resnet, transformer, bert} have demonstrated remarkable success across various domains~\citep{speech, translation, segmentation}. However, many of them are black-box models containing complex operators, making it hard to interpret and understand how a decision was made. Although many efforts~\citep{lime, shap, gradcam} have been made to explain a model's decision in a post hoc manner, \citet{explanation1, explanation2, explanation3, stop} show that these explanations are oftentimes not reliable as the explanation model might loosely approximate the underlying model. For example, the explanation model exhibits similar performance to the black-box model but yet relies on entirely different features. Such discrepancy between the explanation model and the black-box model could lead to misleading explanations, \eg, attributing the decision to irrelevant features or missing out important features. Misleading explanations are particularly concerning in high-stake applications such as medical analysis~\citep{hou2024self, liu2023explainable} and legal justice~\citep{richmond2024explainable, deeks2019judicial}. Rather than introducing post hoc explanations to explain a black-box model, \citet{stop} argues that one should create an interpretable model in the first place where each component is designed with a distinct purpose, facilitating an interpretable prediction.

Concept bottleneck models (CBMs)~\citep{cbm} aim to enhance interpretability by introducing high-level, human-understandable concepts, such as ``red color'' and ``round shape'', as an intermediate bottleneck, which decomposes a model into two modules: a concept recognition model and a task predictor. The neural-network-based concept recognition model maps the input image to probabilities associated with various concepts. Using these probabilities, the task predictor, typically a linear predictor, produces the probabilities for the various classes. Since the final prediction (\ie, the class with the highest probability) can be interpreted in terms of these concepts, the model's decision-making process becomes more intuitive for humans to understand.
To improve performance on downstream tasks, methods like CEM~\citep{cem}, ProbCBM~\citep{probcbm}, and others~\citep{concept_emb1, concept_emb2} change the outputs of the concept recognition model from concept probabilities to concept embeddings. While boosting task performance, such approaches significantly compromise interpretability since the dimensions within concept embeddings lack semantic meanings.
On the other hand, to further improve interpretability, some approaches~\citep{dcr, soft_tree_cbm, len} propose architectures for the task predictor that incorporate logical rules, allowing task predictions to be explicitly explained through these rules. However, these logical rules are usually learned from data rather than being predefined by humans, limiting our ability to integrate prior domain knowledge into the model. 
Additionally, there is currently no theoretical guarantee regarding the performance of the overall model, obscuring the relationship between overall performance and that of the concept recognition model or the task predictor.

To address these challenges, we propose a novel model architecture called Neural Probabilistic Circuits (\methodname s), which enable compositional and interpretable predictions through logical reasoning. An \methodname~comprises two modules: an attribute recognition model and a task predictor. Unlike existing approaches that primarily focus on numerous binary concepts (\eg, ``red color'', ``yellow color''), we introduce a higher-level categorical characteristic called attributes, which describe the types of concepts (\eg, ``color''). 
This approach reduces the need for additional concept selection or pruning to improve model efficiency~\citep{len, entropy_len, intervention_aware}, while also achieving better performance in concept recognition.
Given an input image, the neural-network-based attribute recognition model produces probability vectors for various attributes, with each vector representing the likelihood of various values for the corresponding attribute. These probability vectors serve as inputs to the task predictor, which is implemented using a probabilistic circuit.
Probabilistic circuits~\citep{spn, cccp, zhao2015relationship, choi2020probabilistic}, a type of graphical models~\citep{pgm}, aim to learn the joint distribution over input variables, in our case, attribute variables and the class variable. During learning, probabilistic circuits embed within their structures and parameters either implicit logical rules learned from data or explicit logical rules predefined by humans. The circuits enable tractable probabilistic reasoning tasks such as joint, marginal, and conditional inferences, thereby revealing relations among the attributes and classes.
By leveraging these relations, \methodname s can reason over outputs from the attribute recognition model to infer the most probable class. Specifically, the prediction score for a given class is the sum of the likelihood of each combination of attribute values weighted by their relevance to the class. As usual, the final prediction corresponds to the class with the highest score.

Given the compositional nature of \methodname s, we propose a three-stage training algorithm. Specifically, the whole procedure involves the following stages:
1) Attribute recognition: We begin by training the attribute recognition model within a multi-task learning framework~\citep{caruana1997multitask, ruder2017overview}.
2) Circuit construction: Next, we construct the circuit using two distinct approaches:
i) Data-driven approach learns the circuit's structure and optimizes its parameters based on data, allowing the underlying logical rules to be embedded within the circuit.
ii) Knowledge-injected approach manually designs the circuit's structure and assigns its parameters to ensure that human-predefined logical rules are explicitly encoded within the circuit.
3) Joint optimization: Finally, the two modules are jointly optimized in an end-to-end manner to further enhance the overall model's performance on downstream tasks.

To provide theoretical guarantees regarding the performance of the overall model, we demonstrate that, due to its compositional nature and the use of probabilistic circuits, \methodname s exhibit a compositional error bound—the error of the overall model is upper-bounded by a linear combination of the errors from the various modules.

In addition, we provide various types of explanations to make it easier for humans to understand \methodname's predictions:
1) Most Probable Explanation (MPE) identifies the combination of attribute values that contributes most significantly to the predicted class.
2) Counterfactual Explanation (CE) answers the question: Would the model have made the correct prediction had the likelihoods of certain attribute values been adjusted?

Empirical results on four image classification datasets demonstrate \methodname's ability to strike an impressive balance between interpretability and performance on downstream tasks. In particular, \methodname~outperforms three representative concept-based models and delivers results competitive even with those of an end-to-end deep neural network. Additionally, we perform extensive ablation studies to investigate the advantages of integrating attributes over concepts, along with the impacts of attribute selections, predictor design approaches, and joint optimization.

Our main contributions are as follows,
\begin{enumerate}
    \item We introduce Neural Probabilistic Circuits (\methodname s), a novel model architecture that combines a neural-network-based attribute recognition model and a probabilistic-circuit-based task predictor, enabling compositional and interpretable predictions through logical reasoning.
    \item We develop a three-stage training algorithm for \methodname s, consisting of 1) attribute recognition through multi-task learning, 2) circuit construction via both data-driven and knowledge-injected approaches, and 3) end-to-end joint optimization.
    \item To the best of our knowledge, we are the first to provide a theoretical guarantee on the performance of the compositional bottleneck models, which shows that \methodname~'s error can be upper-bounded by a linear combination of the errors from its modules.
    \item We provide various types of explanations to facilitate human understanding of \methodname s' predictions, including most probable explanations and counterfactual explanations.
    \item We empirically show that \methodname~demonstrates competitive performance in image classification tasks while providing enhanced interpretability.
\end{enumerate}

\section{Preliminaries}\label{sec:prelim}
Probabilistic circuits are a class of graphical models that is used to express a joint distribution over a set of random variables $Z_{1:N}$.
A probabilistic circuit $f_S$ (henceforth simply referred to as a \textit{circuit}) consists of a rooted directed acyclic graph where leaf nodes are univariate indicators of categorical variables\footnote{We mainly focus on probabilistic circuits over categorical random variables. An extension to the continuous ones is standard.} (\ie, $\mathbb{I}(Z_i=z_i),~z_i\in \mathcal{Z}_i,~i\in [N]$) and internal nodes consist of sum nodes and product nodes.
Each sum node computes a weighted sum of its children, and each product node computes a product of its children.
In an unnormalized circuit, the root node outputs the unnormalized joint probability over variables.
Any unnormalized circuit can be transformed into an equivalent, normalized circuit via weight updating \cite{PeharzTPD15, ZhaoMP15}. Hence, without loss of generality, we always assume that $f_S$ is normalized; thus, $f_S(z_{1:N}) = \Pr(Z_{1:N}=z_{1:N})$. Readers are referred to~\citet{sanchez2021sum} for more details on circuits.

In circuits, the \textit{scope} of a node is defined as the set of variables that have indicators among the node's descendants, which can be computed recursively—if $v$ is a leaf node, say, an indicator over $Z_i$, then $\text{scope}(v) = \{Z_i\}$; otherwise, $\text{scope}(v)=\cup_{\tilde{v} \in \text{children}(v)} ~\text{scope}(\tilde{v})$.
% \han{Change the ``complete'' to ``smooth'' since it is now the standard term used in the literature of PCs.}
% \weixin{Done.}
A circuit is \textit{smooth} iff each sum node has children with identical scope. A circuit is \textit{decomposable} iff each product node has children with disjoint scopes.
If a circuit is smooth and decomposable, then any marginal probability can be computed by setting the leaf nodes corresponding to the marginalized variables to 1. Consequently, inferences are efficient in a circuit as any joint, marginal, or conditional inference can be computed by at most two passes in a circuit. For instance, $\Pr(Z_1=z_1 \mid Z_{2:N}=z_{2:N}) = \frac{\Pr(Z_{1:N}=z_{1:N})}{\Pr(Z_{2:N}=z_{2:N})} = \frac{f_S(Z_{1:N}=z_{1:N})}{f_S(Z_1=\emptyset, Z_{2:N}=z_{2:N})}$ where $Z_1=\emptyset$ implies $\mathbb{I}(Z_1=\tilde{z}_1)=1, \forall \tilde{z}_1 \in \mathcal{Z}_1$; thus, computing a conditional probability only requires two forward processes in a circuit, each in linear time \textit{w.r.t.} its size. In this paper, we focus on smooth and decomposable circuits.

\section{Neural Probabilistic Circuits}\label{sec:method}
In this section, we introduce Neural Probabilistic Circuits (\methodname s). We begin by describing the model architecture and the inference process, illustrating how \methodname s enable compositional and interpretable predictions through logical reasoning (Section \ref{sec:model_architecture}). Next, we elaborate on the three-stage training algorithm for training \methodname s. In particular, we propose two distinct approaches for building circuits: a data-driven approach and a knowledge-injected approach (Section \ref{sec:training_algorithm}). Finally, we provide a theoretical analysis establishing the relationship between the error of the overall model and those of its individual modules (Section \ref{sec:theoretical_analysis}).

\subsection{Model Architecture and Inference} \label{sec:model_architecture}
Figure \ref{fig:framework} presents an overview of an \methodname, which consists of an attribute recognition model and a task predictor.
\begin{figure}[t]
    \centering
    \includegraphics[width=5.7in]{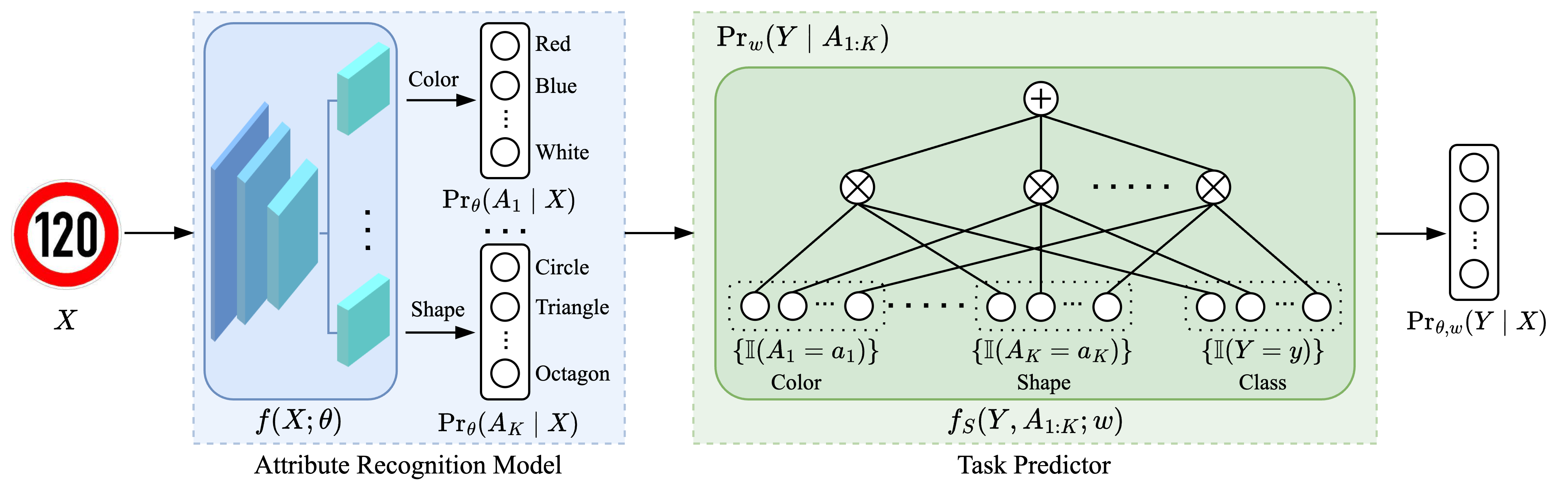}
    \caption{\small Model architecture of an \methodname, consisting of an attribute recognition model and a task predictor.
    The attribute recognition model is a neural network $f(X;\theta)$ which takes an image $X$ as input and outputs $K$ probability vectors $\{\Pr_{\theta}(A_k\mid X)\}_{k=1}^K$.
    The task predictor is a probabilistic circuit $f_S(Y, A_{1:K}; w)$ taking an instance of attributes as input and providing the conditional probability $\Pr_w(Y\mid A_{1:K})$. 
    By leveraging these relations between classes and attributes alongside the probability distributions of various attributes, \methodname~produces the probability vector $\Pr_{\theta, w}(Y\mid X)$.
    \label{fig:framework}}
\end{figure}
The attribute recognition model is a neural network that processes an input image to identify its high-level visual attributes, such as color and shape.
The task predictor is a (normalized) probabilistic circuit that models the joint distribution over attributes and classes, embedding either implicit or explicit logical rules within its structure and parameters during learning. The circuit enables efficient probabilistic reasoning, including joint, marginal, and conditional inferences. Specifically, given a particular assignment of attributes, the circuit can infer the probability of a specific class. By leveraging these conditional dependencies alongside the probability distributions of the various attributes (\ie, outputs from the attribute recognition model), \methodname~produces the probabilities of the image belonging to various classes. The class with the highest probability is recognized as the predicted class.

Formally, let $X\in\mathcal{X}, A_k\in\mathcal{A}_k, Y\in\mathcal{Y}$ denote the input variable, the $k$-th attribute variable, and the class variable. The variables' instantiations are represented by $x, a_k, y$, respectively. In particular, we consider $K$ attributes, \ie, $A_1, \ldots, A_K$ (or $A_{1:K}$ in short). Each attribute $A_k$ has $q_k$ possible values, \ie, $\vert \mathcal{A}_k \vert = q_k$.
The attribute recognition model $f(X;\theta)$ is parameterized by $\theta$. Given an input instance $x$, the model outputs $K$ probability vectors. The $k$-th probability vector, denoted as $f_k(x; \theta) \in \mathbb{R}^{q_k}$, shows the probabilities of $x$'s $k$-th attribute taking different values $a_k$, \ie, $[f_k(x; \theta)]_{a_k} = \Pr_{\theta}\left( A_k=a_k \mid X=x \right)$.
The task predictor $f_S(Y, A_{1:K}; w)$ is a probabilistic circuit with structure $S$ and parameters $w$, which models the joint distribution over $Y, A_{1:K}$. Specifically, when taking an instance of attributes $a_{1:K}$ and a class label $y$ as input, the circuit outputs the joint probability $\Pr_w(Y=y, A_{1:K} = a_{1:K})$. By the virtue of its efficient inferences, the circuit also supports efficient conditional queries, \eg, $\Pr_w\left(Y=y\mid A_{1:K} = a_{1:K} \right) = f_S(y, a_{1:K}; w)/f_S(\emptyset, a_{1:K}; w)$.

Prior to describing how an \methodname~predicts a class, we make the following mild assumptions on the selected attributes.

\begin{assumption}[Sufficient Attributes]
    \label{assumption1}
    Given the attributes, the class label is conditionally independent of the input, \ie, $Y\perp X\mid A_1,\ldots, A_K$.
\end{assumption}

\begin{assumption}[Complete Information]
    \label{assumption2}
    Given any input, all attributes are conditionally mutually independent, \ie, $A_1\perp A_2\perp\cdots\perp A_K \mid X$.
\end{assumption}

\noindent\textbf{Remarks.}
The first assumption essentially assumes that the attributes are sufficient to infer the class label of interest. The second assumption assumes the input contains complete information regarding the attributes such that they are conditionally mutually independent.
These assumptions are mild and often hold in practice. For instance, in the context of traffic signs, if the attributes include the shape (\eg, circle), color (\eg, red), and symbol (\eg, slash) of a sign, they collectively provide enough information to infer the class label (\eg, no entry) without requiring additional details from the raw image. On the other hand, the raw image fully encodes the attributes (\eg, shape, color, and symbol). Once the input is observed, these attributes can be independently determined.

Under Assumption \ref{assumption1} and \ref{assumption2}, an \methodname~outputs the probability of an input $x$ being a class $y$ as follows,
{
\small
\begin{align}
    \operatorname{Pr}_{\theta, w}\left(Y=y \mid X=x\right)
    % & =\sum_{a_{1:K}}\operatorname{Pr}_{\theta, w}\left(Y=y, A_{1:K} = a_{1:K} \mid X=x\right) \notag                                                                \\
     & =\sum_{a_{1:K}}\operatorname{Pr}_{w}\left(Y=y \mid A_{1:K} = a_{1:K}, X=x\right) \cdot \operatorname{Pr}_{\theta}\left(A_{1:K} = a_{1:K} \mid X=x\right) \notag                                                                                           \\
     & =\sum_{a_{1:K}}\underbrace{\operatorname{Pr}_{w}\left(Y=y \mid A_{1:K} = a_{1:K}\right)}_{\text{task predictor}} \cdot \underbrace{\prod_{k=1}^K \operatorname{Pr}_{\theta}\left(A_k=a_k \mid X=x\right)}_{\text{attribute recognition model}}.\label{eq:inference}
\end{align}
}
% \resizebox{0.95\textwidth}{!}{%
% \begin{minipage}{\textwidth}
% \begin{align}
%     \operatorname{Pr}_{\theta, w}\left(Y=y \mid X=x\right) &= \sum_{a_{1:K}}\operatorname{Pr}_{w}\left(Y=y \mid A_{1:K} = a_{1:K}, X=x\right) \cdot \operatorname{Pr}_{\theta}\left(A_{1:K} = a_{1:K} \mid X=x\right) \notag \\
%      &= \sum_{a_{1:K}}\underbrace{\operatorname{Pr}_{w}\left(Y=y \mid A_{1:K} = a_{1:K}\right)}_{\text{task predictor}} \cdot \underbrace{\prod_{k=1}^K \operatorname{Pr}_{\theta}\left(A_k=a_k \mid X=x\right)}_{\text{attribute recognition model}}.\label{eq:inference}
% \end{align}
% \end{minipage}
% }
Equation (\ref{eq:inference}) is derived using the assumptions, and the two interior terms are given by the circuit-based task predictor and the attribute recognition model, respectively.
Subsequently, the predicted class is the one with the largest probability, \ie, $\hat{y} = \arg\max_{y\in \mathcal{Y}}~\Pr_{\theta, w}(Y=y \mid X=x)$.

In summary, we propose a novel model architecture for image recognition tasks. The architecture is interpretable by design, thanks to the integration of an attribute bottleneck and the probabilistic semantics of probabilistic circuits. Together, these modules enable predictions which can be interpreted using the likelihood of different attributes and the conditional dependencies between attributes and classes.

\subsection{Three-Stage Training Algorithm} \label{sec:training_algorithm}
In this section, we will propose a three-stage training algorithm for \methodname s comprising the following stages:  1) attribute recognition through multi-task learning (Section \ref{sec:attribute_learning}), 2) circuit construction via both data-driven and knowledge-injected approaches (Section \ref{sec:relation_learning}), and 3) joint optimization (Section \ref{sec:joint_learning}).

\subsubsection{Attribute Recognition} \label{sec:attribute_learning}
% Neural learning aims to train the neural module $\mathcal{M}_{\theta}$ such that each recognition model in $\mathcal{M}_{\theta}$ can recognize one attribute well.
% Specifically, we employ the cross-entropy loss $\mathcal{L}_{\text{CE}}$ on the training data $D := \{ (x, y, a_1, \ldots, a_K)\}$. Taking the $k$-th model as an instance, the training loss is given by:
% {\small
% \begin{equation}
%     \mathcal{L}_{\text{CE}}\left(\theta_k; D\right) = - \sum_{(x, a_k)\in D} e_{a_k}^{\top}\log f_k(x; \theta_k),
% \end{equation}}
% where $e_{a_k}$ is a one-hot vector with the index $a_k$ being one.

% From a structural perspective, $\mathcal{M}_{\theta}$ can be implemented using different architectures:
% \emph{1) Independent models}: Each attribute is learned by an independent model such as a multi-layer perceptron (MLP).
% \emph{2) Models with shared parameters}: Following the architecture of multi-task learning \cite{zhang2021survey}, the models have shared layers that learn common features, followed by attribute-specific layers.

We aim to train the attribute recognition model $f(X;\theta)$ such that each attribute is recognized well. To this end, we adopt a multi-task learning framework~\citep{zhang2021survey}, where each task is to recognize a particular attribute. Specifically, we use the cross-entropy loss for each task and assign weights to the task losses based on the size of the corresponding attribute space. These weights normalize the task losses, preventing certain tasks from dominating the training process~\citep{kendall2018multi, gregoire2024sample, wang2020multi}.
The overall training loss for attribute recognition is defined as follows,
{
\small
\begin{equation}
    \mathcal{L}_{\text{Attribute}}(\theta; D) = -\frac{1}{K}\sum_k\frac{1}{\log q_k} \left( \frac{1}{|D|} \sum_{x\in D}g_k^T(x) f_k(x;\theta) \right).\label{eq:attribute_loss}
\end{equation}
}
The term inside the parentheses represents the mean cross-entropy loss over the training dataset $D$, where $f_k(x;\theta)\in \mathbb{R}^{q_k}$ and $g_k(x)\in \mathbb{R}^{q_k}$ are, respectively, the output vector and the label vector corresponding to the $k$-th attribute. Specifically, $g_k(x)$ is a probability vector that sums to one, with each entry representing the ground-truth likelihood of $x$ having a particular attribute value. For instance, for the color attribute, an image of a polar bear would have a one-hot label vector with 1 representing ``white''. In contrast, an image of a zebra might have a probabilistic label vector, with 0.5 representing ``black'' and 0.5 representing ``white''.

\subsubsection{Circuit Construction} \label{sec:relation_learning}
We aim to construct a probabilistic circuit $f_S(Y, A_{1:K}; w)$ that models the joint distribution over $Y, A_{1:K}$. To achieve this, we propose two distinct approaches for building the circuit's structure and parameters: a data-driven approach and a knowledge-injected approach.

\paragraph{Data-Driven Approach}
This approach \textit{learns a circuit's structure and optimizes its parameters} using data in the form of $\{(y, a_{1:K})\}$. As described above, the training dataset is defined as $D=\{(x, g_{1:K}(x), y)\}$, where each attribute label is represented as a probability vector rather than a single value. For each data, we sample an $a_k$ from the categorical distribution defined by $g_k(x)$, \ie, $a_k \sim \text{Categorical}(g_k(x)),~k\in[K]$, which results in a processed dataset $\bar{D}=\{(x, y, a_{1:K})\}$.
\textbf{1) Structure Learning}: LearnSPN \cite{learnspn} is a mainstream algorithm for learning a circuit's structure from data. LearnSPN recursively identifies independent groups to create product nodes, clusters data to form sum nodes, and assigns single variables as leaf nodes. In our approach, we apply LearnSPN on $\bar{D}$ to derive a structure tailored to the observed data.
\textbf{2) Parameter Learning}: Given the learned structure, optimizing the circuit's weights (\ie, the weights of edges emanating from sum nodes) is framed as a maximum likelihood estimation (MLE) problem, with the following loss function:
{
\small
\begin{equation*}
    \mathcal{L}_{\text{MLE}}(w; D) = -\sum_{(y, a_{1:K})\in \bar{D}} \log f_S\left(y, a_{1:K}; w \right).
\end{equation*}
}
We adopt the widely used CCCP algorthim~\cite{cccp} which iteratively applies multiplicative weight updates on $w$ to minimize $\mathcal{L}_{\text{MLE}}(w; \bar{D})$. CCCP is guaranteed to converge monotonously.
Overall, with the learned structure and optimized parameters, the circuit captures the underlying logical rules present in the observed data, thus effectively modeling the joint distribution over attributes and classes.

\paragraph{Knowledge-Injected Approach}
Incorporating domain knowledge into a model helps ensure that its behavior aligns with the human's understanding of the domain. In practice, domain knowledge can be represented as a set of weighted logical rules. These rules are usually derived by observing patterns in existing samples, with the weight of each rule reflecting the frequency with which the rule holds true among the observed data. For instance, in the context of traffic signs, a rule could be: $\mathbb{I}(\text{shape}=\text{circle}) \wedge \mathbb{I}(\text{color}=\text{red}) \wedge \mathbb{I}(\text{symbol}=\text{slash}) \wedge \mathbb{I}(\text{class}=\text{no entry})$, as this holds true for an image of a ``no entry'' sign. 
The rules summarized for various tasks, along with the process of establishing them, are detailed in Appendix \ref{sec:rule}. 
By leveraging these rules, the knowledge-injected approach \textit{manually constructs a circuit's structure and assigns its parameters} to model the joint distribution over $Y, A_{1:K}$.
\textbf{1) Structure Construction}: We construct the structure of the circuit to be a depth-2 weighted sum-of-products formula. More specifically, consider $L$ rules of the form $\{r_l := \mathbb{I}(A_1=a_1^l) \wedge \ldots \wedge \mathbb{I}(A_K=a_K^l) \wedge \mathbb{I}(Y=y^l)\}_{l=1}^L$. The structure is constructed as follows: 
i) A set of leaf nodes is created to represent the indicator variables of $Y, A_{1:K}$.
ii) Based on these leaf nodes, a layer of product nodes is built, where each product node is associated with a rule. Specifically, the $l$-th product node connects to the leaf nodes that represent the conditions in the rule $r_l$, \ie, $\mathbb{I}(A_1=a_1^l), \ldots, \mathbb{I}(A_K=a_K^l), \mathbb{I}(Y=y^l)$. 
iii) A single sum node, which serves as the root node of the circuit, is placed above the product node layer. This sum node aggregates the outputs of all product nodes.
\textbf{2) Parameter Assignment}: The parameters in the circuit refer to the weights of edges connecting the product nodes to the sum node. The weight for the $l$-th edge is assigned as the frequency of the rule $r_l$.
An example of a circuit constructed using this approach is illustrated in Figure \ref{fig:manual_circuit}.
Through these two steps, the human-predefined logical rules are manually encoded into the circuit's structure and parameters. Proposition \ref{prop:joint} ensures that the output of the circuit's root node represents the empirical joint probability over attributes and classes.

\begin{figure}[tp]
    \centering
    \includegraphics[width=2.9in]{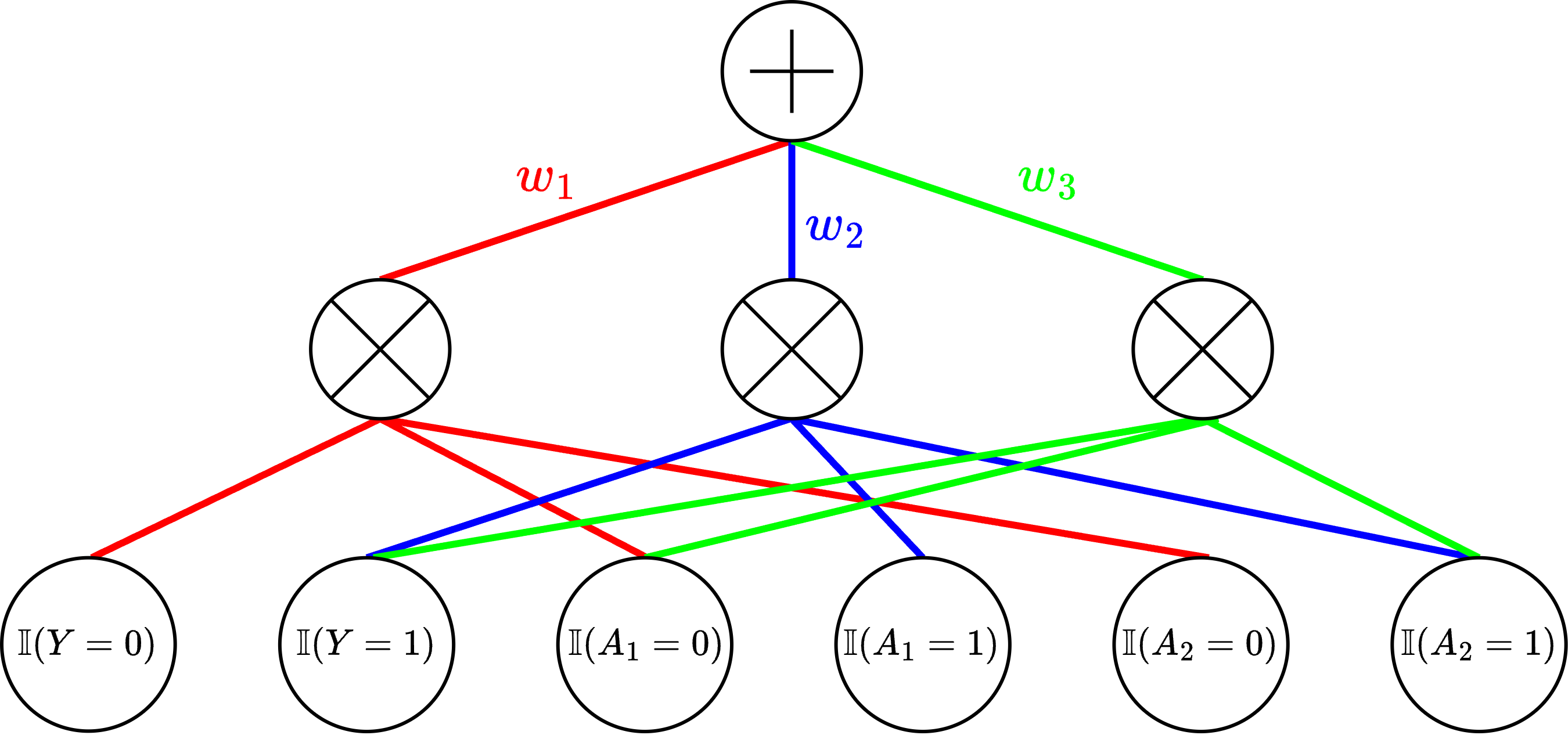}
    \caption{Illustration of a probabilistic circuit constructed by the knowledge-injected approach. The circuit encodes the set of weighted logical rules: $\{w_1 \cdot \left(\mathbb{I}\left(A_1=0\right) \wedge \mathbb{I}\left(A_2=0\right) \wedge \mathbb{I}\left(Y=0\right) \right),~w_2 \cdot \left(\mathbb{I}(A_1=1) \wedge \mathbb{I}(A_2=1) \wedge \mathbb{I}(Y=1) \right),~w_3 \cdot \left( \mathbb{I}\left(A_1=0\right) \wedge \mathbb{I}\left(A_2=1\right) \wedge \mathbb{I}\left(Y=1\right)\right)\}$.}
    \label{fig:manual_circuit}
\end{figure}

\begin{proposition}
The circuit constructed using the knowledge-injected approach models the empirical joint distribution over attributes and classes. Specifically, the output of the root node represents the empirical joint probability of attributes and classes.
\label{prop:joint}
\end{proposition}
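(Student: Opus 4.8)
The plan is to evaluate the constructed circuit in a single bottom-up pass on an arbitrary query $(y, a_{1:K})$ and show that the value at the root coincides with the empirical frequency of that combination. First I would describe the leaf layer: each leaf indicator $\mathbb{I}(A_k = \tilde a_k)$ returns $1$ when the queried value satisfies $a_k = \tilde a_k$ and $0$ otherwise, and likewise for the indicators of $Y$. Next I would analyze the product layer: by construction the $l$-th product node multiplies exactly the indicators $\mathbb{I}(A_1 = a_1^l), \ldots, \mathbb{I}(A_K = a_K^l), \mathbb{I}(Y = y^l)$ appearing in rule $r_l$, so its value on $(y, a_{1:K})$ is $\prod_{k=1}^{K} \mathbb{I}(a_k = a_k^l)\cdot \mathbb{I}(y = y^l)$, which equals $1$ precisely when $(a_{1:K}, y) = (a_{1:K}^l, y^l)$ and $0$ otherwise.

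Then I would compute the root: the single sum node returns $\sum_{l=1}^{L} w_l \prod_{k=1}^{K} \mathbb{I}(a_k = a_k^l)\cdot \mathbb{I}(y = y^l)$. Since the $L$ rules enumerate distinct attribute-class combinations, at most one product node fires for any given input; hence the sum collapses to $w_{l^\star}$ when some rule $r_{l^\star}$ matches $(y, a_{1:K})$, and to $0$ when none does. Because $w_{l^\star}$ is assigned as the empirical frequency of rule $r_{l^\star}$, and an unmatched query is exactly one whose empirical frequency is $0$, the root output equals $\widehat{\Pr}(Y = y, A_{1:K} = a_{1:K})$ for every configuration, which is the desired conclusion.

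Finally I would verify that this object is a legitimate normalized probabilistic circuit, so that the root value is a bona fide probability mass function rather than an arbitrary score. Smoothness holds because every product-node child of the root has the full scope $\{A_1, \ldots, A_K, Y\}$ (each rule constrains all variables); decomposability holds because within each product node the children are indicators over pairwise-distinct variables; and normalization follows from $\sum_{l} w_l = 1$, since the rule frequencies over all observed patterns sum to one. Together these confirm that the circuit represents a valid joint distribution whose mass on each configuration is the empirical frequency.

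The main obstacle I anticipate is making the counting argument airtight: the collapse of the sum to a single weight relies on the rule set listing \emph{distinct} combinations, each equipped with its normalized empirical frequency. I would therefore state explicitly the (mild) assumption that each observed attribute-class pattern corresponds to exactly one rule and that $w_l$ is the normalized count of $r_l$; otherwise the root would return the \emph{sum} of the weights of coincident rules, and normalization could fail. Once this bookkeeping is pinned down, the remaining steps are direct evaluations of the sum and product nodes.
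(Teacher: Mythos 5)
Your proof is correct, and it takes a genuinely different route from the paper's. The paper's proof introduces a latent rule variable $R$ and reads the root's output $\sum_l \Pr_w(R=r_l)\,\mathbb{I}(a_1=a_1^l)\cdots\mathbb{I}(a_K=a_K^l)\,\mathbb{I}(y=y^l)$ probabilistically: each product node is interpreted as the conditional point mass $\Pr_w(A_{1:K}=a_{1:K}, Y=y \mid R=r_l)$, so the law of total probability immediately yields $\Pr_w(A_{1:K}=a_{1:K}, Y=y)$. You instead evaluate the circuit combinatorially: because the rules enumerate distinct attribute--class patterns, at most one product node fires, the sum collapses to the single matching weight $w_{l^\star}$ (or to $0$), and that weight is by construction the normalized empirical count. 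Each style buys something. The mixture view is more compact and is insensitive to coincident rules (a mixture of duplicate point masses still marginalizes correctly), but it leaves implicit exactly the bookkeeping you make explicit --- that weights are normalized frequencies, so the mixture marginal \emph{is} the empirical distribution; the paper's appendix on rule construction (Appendix \ref{sec:rule}) confirms your assumption that each observed pattern yields exactly one rule whose weight is its normalized occurrence count, so your distinctness hypothesis is not an extra restriction but a feature of the construction. Your proof also adds a verification the paper omits from the argument itself: smoothness (all product children of the root have full scope $\{A_1,\ldots,A_K,Y\}$), decomposability (indicators over pairwise-distinct variables), and $\sum_l w_l = 1$, which together certify that the root value is a genuine normalized probability mass rather than an unnormalized score --- the paper instead leans on its standing normalization convention from Section \ref{sec:prelim}. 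Your caveat about duplicates is also accurate: if coincident rules each carried the full pattern frequency, the root would over-count and normalization would fail under your collapse argument, whereas the paper's latent-variable reading degrades more gracefully there; pinning this down explicitly, as you do, makes the counting argument airtight.
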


\begin{proof}
Let $R$ denote the rule variable. For a given instance $(y, a_{1:K})$, the output of the root node is computed as,
{
\small
\begin{align*}
    f_S(y, a_{1:K}; w) &= \sum_l \Pr_{w}(R=r_l)\cdot\mathbb{I}(a_1=a_1^l)\ldots\mathbb{I}(a_K=a_K^l)\cdot\mathbb{I}(y=y^l) \\
    &= \sum_l \Pr_{w}(R=r_l)\cdot\Pr_{w}(A_{1:K}=a_{1:K}, Y=y \mid R=r_l) \\
    &= \Pr_{w}(A_{1:K}=a_{1:K}, Y=y),
\end{align*}
}
where $\Pr_{w}(R=r_l)$ is the weight for the $l$-th edge and $\mathbb{I}(a_1=a_1^l)\ldots\mathbb{I}(a_K=a_K^l)\cdot\mathbb{I}(y=y^l)$ is the output of the $l$-th product node. The condition $R=r_l$ indicates that the rule $\mathbb{I}(A_1=a_1^l) \wedge \ldots \wedge \mathbb{I}(A_K=a_K^l) \wedge \mathbb{I}(Y=y^l)$ is satisfied.
\end{proof}

\subsubsection{Joint Optimization} \label{sec:joint_learning}
Thanks to the differentiability of circuits, \methodname s can be fine-tuned in an end-to-end manner to further improve the performance of the overall model on downstream tasks. Specifically, the loss function is defined as follows,
{
\small
\begin{equation}
    \mathcal{L}_{\text{Joint}}(\theta, w; (x, y)) = -\sum_{(x, y)\in D}\log \operatorname{Pr}_{\theta, w} (Y=y \mid X=x). \label{eq:joint}
\end{equation}
}
To optimize this loss, we simply employ the stochastic gradient descent algorithm for updating $\theta$, while the projected gradient descent algorithm is used to update $w$ to ensure the positivity of the circuit weights. The detailed optimization process is provided in Appendix \ref{sec:joint}.

\subsection{Theoretical Analysis}
\label{sec:theoretical_analysis}
In this section, we present an error analysis for \methodname s to understand how the performance of individual modules affects that of the overall model. Given the overall model and the attribute recognition model are discriminative models, while the probabilistic circuit is a generative model, we define the following errors:
\textbf{1) Error of the overall model}: $\epsilon_{\theta, w} := \mathbb{E}_{X}\left[d_{\text{TV}}(\Pr_{\theta, w}(Y\mid X), \Pr(Y\mid X)) \right]$, which represents the expected total variance distance between the learned and true conditional distributions of $Y$ given $X$.
\textbf{2) Error of the attribute recognition model}: $\epsilon_{\theta} := \mathbb{E}_{X}\left[d_{\text{TV}}(\Pr_{\theta}(A_{1:K}\mid X), \Pr(A_{1:K}\mid X)) \right]$, which quantifies the expected total variation distance between the learned and true conditional distributions of the attributes $A_{1:K}$ given $X$. Additionally, we define $\epsilon_{\theta}^k := \mathbb{E}_{X}\left[d_{\text{TV}}(\Pr_{\theta}(A_k\mid X), \Pr(A_k\mid X)) \right]$ as the error for each individual attribute $A_k$.
\textbf{3) Error of the probabilistic circuit}: $\epsilon_w := d_{\text{TV}}(\Pr_{w}(Y, A_{1:K}), \Pr(Y, A_{1:K}))$, which measures the total variation distance between the learned and true joint distributions of $Y$ and $A_{1:K}$. The above errors capture how closely the learned models approximate the underlying true distributions.

\begin{theorem}[Compositional Error]
    \label{thm:main-theorem}
    Under Assumptions \ref{assumption1} and \ref{assumption2}, the error of an \methodname~is bounded by a linear combination of the errors of the attribute recognition model and the circuit-based task predictor. In particular, the error of the attribute recognition model across all attributes is bounded by the sum of the errors for each attribute, i.e.,
    \begin{equation*}
        \epsilon_{\theta, w} \leqslant \epsilon_{\theta} + 2\epsilon_w \leqslant \sum_{k=1}^K \epsilon_{\theta}^k + 2\epsilon_w.
    \end{equation*}
\end{theorem}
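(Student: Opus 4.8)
The plan is to prove the two inequalities separately. For the first, $\epsilon_{\theta,w}\le\epsilon_\theta+2\epsilon_w$, I would start from the inference formula in Equation~\eqref{eq:inference}, which under Assumptions~\ref{assumption1} and~\ref{assumption2} expresses both the learned output $\Pr_{\theta,w}(Y=y\mid x)=\sum_{a_{1:K}}\Pr_w(y\mid a_{1:K})\prod_k\Pr_\theta(A_k=a_k\mid x)$ and the true conditional $\Pr(Y=y\mid x)=\sum_{a_{1:K}}\Pr(y\mid a_{1:K})\prod_k\Pr(A_k=a_k\mid x)$ as mixtures over attribute assignments. The key device is to introduce a hybrid quantity $M(y\mid x):=\sum_{a_{1:K}}\Pr_w(y\mid a_{1:K})\,\Pr(a_{1:K}\mid x)$ that pairs the learned circuit conditional with the true attribute distribution, and then apply the triangle inequality for $d_{\text{TV}}$ to split the error into a term comparing $\Pr_{\theta,w}$ with $M$ (which isolates the attribute error) and a term comparing $M$ with $\Pr(Y\mid x)$ (which isolates the circuit error).

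For the attribute term I would use that $\Pr_w(\cdot\mid a_{1:K})$ is a probability vector summing to one; after the triangle inequality and interchanging the order of summation, the circuit conditional factors out and leaves exactly $d_{\text{TV}}(\Pr_\theta(A_{1:K}\mid x),\Pr(A_{1:K}\mid x))$, whose expectation over $X$ is $\epsilon_\theta$.

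The hard part is the circuit term. Here the crucial step is to take the expectation over $X$ first: because the weight on each assignment is the true conditional $\Pr(a_{1:K}\mid x)$, averaging over $X$ collapses it into the true attribute marginal $\Pr(a_{1:K})$, so the term becomes $\tfrac12\sum_{y,a}\Pr(a)\lvert\Pr_w(y\mid a)-\Pr(y\mid a)\rvert$. To relate this $\Pr(a)$-weighted conditional discrepancy to the joint error $\epsilon_w$, I would rewrite $\Pr(a)\Pr(y\mid a)=\Pr(y,a)$ and $\Pr(a)\Pr_w(y\mid a)=\tfrac{\Pr(a)}{\Pr_w(a)}\Pr_w(y,a)$, then add and subtract $\Pr_w(y,a)$ and apply the triangle inequality. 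Summing over $y$ and using $\sum_y\Pr_w(y,a)=\Pr_w(a)$ makes the first piece collapse to $\lvert\Pr(a)-\Pr_w(a)\rvert$, while the second piece is $\sum_y\lvert\Pr_w(y,a)-\Pr(y,a)\rvert$. Summing over $a$ and halving then bounds the circuit term by $d_{\text{TV}}(\Pr_w(A_{1:K}),\Pr(A_{1:K}))+\epsilon_w$, and since marginalizing over $Y$ can only shrink total variation (the data processing inequality), the marginal distance is itself at most $\epsilon_w$; this is precisely where the factor of two originates.

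Finally, for the second inequality $\epsilon_\theta\le\sum_k\epsilon_\theta^k$, Assumption~\ref{assumption2} lets me factor both $\Pr_\theta(A_{1:K}\mid x)$ and $\Pr(A_{1:K}\mid x)$ into products of their per-attribute conditionals, so the statement reduces to the subadditivity of total variation over product distributions, $d_{\text{TV}}(\prod_k P_k,\prod_k Q_k)\le\sum_k d_{\text{TV}}(P_k,Q_k)$, proved by a standard telescoping argument that swaps one factor at a time; applying it pointwise in $x$ and taking expectation over $X$ gives the claim. I expect the generative-to-discriminative conversion in the circuit term---turning a conditional mismatch weighted by an $x$-dependent distribution into the fixed joint error---to be the main obstacle, with the expectation-first maneuver and the marginal data-processing bound being the essential ideas.
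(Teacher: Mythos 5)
Your proposal is correct and follows essentially the same route as the paper's proof: the same hybrid term pairing the learned circuit conditional $\Pr_w(Y\mid A_{1:K})$ with the true attribute posterior, the same expectation-over-$X$ collapse of $\Pr(a_{1:K}\mid x)$ to the marginal $\Pr(a_{1:K})$, the same add-and-subtract of $\Pr_w(Y,A_{1:K})$ followed by the marginalization (data-processing) bound that produces the factor $2\epsilon_w$, and the same telescoping subadditivity of total variation over products for $\epsilon_\theta \leqslant \sum_k \epsilon_\theta^k$. The only cosmetic difference is your intermediate rewriting with the ratio $\Pr(a_{1:K})/\Pr_w(a_{1:K})$, which the paper avoids (sidestepping any division-by-zero caveat) by adding and subtracting $\Pr_w(a_{1:K})\cdot\Pr_w(y\mid a_{1:K})$ directly.
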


\begin{proof}[Proof Sketch]
    By leveraging Equation (\ref{eq:inference}), the upper bound of $\epsilon_{\theta, w}$ is decomposed into two terms. The first term depends only on $\theta$ and represents the error of the attribute recognition model, while the second term depends only on $w$ and captures the error of the circuit. In particular, the overall error of the attribute recognition model is further expanded into the errors across individual attributes.
\end{proof}
The complete proof is deferred to Appendix \ref{sec:omit}.
Theorem \ref{thm:main-theorem} demonstrates that reducing the error for any single attribute helps reduce the overall error of the attribute recognition model. More importantly, the error bound of an \methodname~is decomposable into contributions from its individual modules, which accredits to the compositional nature of \methodname s and the incorporation of probabilistic circuits. Consequently, reducing the error of any individual module helps improve the performance of the \methodname.

\section{Model Explanations}
As discussed in Section \ref{sec:model_architecture}, model predictions can be interpreted using the attribute recognition results and the conditional dependencies between classes and attributes. To further enhance the human's understanding of the model predictions, we provide various explanations addressing the following questions:
\textit{1) Which assignment of attributes contributes most to the model prediction?}
\textit{2) In cases where the model prediction was incorrect, could an adjustment to the attribute recognition results lead to a correct prediction?}
With a slight abuse of notation, let $\theta$ denote the parameters of the trained attribute recognition model and let $S, w$ denote the structure and parameters of the constructed probabilistic circuit.

\subsection{Most Probable Explanations}
To address the first question, we define Most Probable Explanations (MPEs) for \methodname s for identifying the highest contributing attribute assignments.

\begin{definition}[Most Probable Explanations]
    Given an input $x$ with prediction $\hat{y}$, the most probable explanation is defined as the attribute assignment that contributes the most in $\Pr_{\theta, w}(Y=\hat{y} \mid X=x)$. Formally,
    $a_{1:K}^* := \arg\max_{a_{1:K}}\operatorname{Pr}_{w}\left(Y=\hat{y} \mid A_{1:K} = a_{1:K}\right) \cdot \prod_{k=1}^K \operatorname{Pr}_{\theta}\left(A_k=a_k \mid X=x\right)$.
\end{definition}
MPE inference is generally challenging for probabilistic circuits. Such inference is tractable for selective circuits~\citep{sanchez2021sum}, but this type of circuit is relatively restrictive in expressiveness. As the number of attributes is small in our experimental settings, we simply use the brute-force algorithm to infer MPEs. Developing more efficient heuristics for MPE inference remains an open problem and is not the primary focus of this paper. Therefore, we leave it for future work.

MPEs provide a concrete explanation as to how the model arrives at a specific class prediction. Specifically, the predicted class is primarily due to the input image's attributes being recognized as $a_{1:K}^*$. These explanations offer attribute-level insights into the model's predictions, thereby enhancing interpretability and the human's understanding of the predictions.

To gain deeper insights into how these explanations represent the model's behavior, we define a property for MPEs, called \textit{alignment}, and introduce a corresponding metric to characterize the behavior of the model.
\begin{definition}[Alignment Rate]
    An MPE $a_{1:K}^*$ is considered aligned with a sample $x$ if $a_{1:K}^*$ matches the ground-truth attribute assignments of $x$, \ie, $a_k^* \in \{ j\in [q_k] \mid g_k^j(x)>0 \},~k\in [K]$.
    % \han{what is the ``ground-truth attribute assignment of $x$''? What if there are multiple $a_{1:K}$ that have strictly positive probabilities under the same $x$? This description seems to suggest that there is a one-to-one correspondence between $a_{1:K}$ and $x$.} 
    The alignment rate is defined as the proportion of aligned MPEs among all correctly predicted samples.
\end{definition}

A high alignment rate reflects strong model reliability, as it suggests that the ground-truth attribute assignments contribute the most during prediction. In other words, the model closely adheres to the human's understanding when making predictions.

% During evaluation, to assess the plausibility of the proposed MPEs, we define the concept of correctness as follows:

% \begin{definition}[Correctness of MPEs]
% An MPE $a_{1:K}^*$ is considered correct if it aligns with the ground-truth attribute values associated with the predicted class $\hat{y}$. Alignment means that $a_{1:K}^*$ belongs to the attribute assignments of instances from $\hat{y}$.
% \end{definition}

% A higher correctness score indicates higher plausibility of the proposed MPEs.

\subsection{Counterfactual Explanations}
To address the second question, we define Counterfactual Explanations (CEs)~\citep{wachter2017counterfactual} for \methodname s to explore admissible changes in attribute recognition results that can correct any incorrectly predicted classes.

\begin{definition}[Counterfactual Explanations]
    Given an input $x$ which has an incorrect model prediction.
    Denote $\mathbf{b} := \{\mathbf{b_k}\}_{k\in [K]}$, $\mathbf{b_k} := \left(\ldots~b_{k, a_k}~\ldots\right)_{a_k\in\mathcal{A}_k}^{\top}$, and $\Pr_{\mathbf{b}}(Y=y \mid X=x) := \sum_{a_{1:K}} \Pr_w\left(Y=y \mid A_{1:K}=a_{1:K}\right) \cdot \prod_{k=1}^K b_{k, a_k}$.
    The counterfactual explanation for the ground-truth label $y$ is a set of probability vectors $\mathbf{b}$ that maximizes $\Pr_{\mathbf{b}}(Y=y \mid X=x)$, \ie, the solution to the following optimization problem,
    {\small
    \begin{equation}
        \begin{aligned}
            \max_{\mathbf{b}} \operatorname{Pr}_{\mathbf{b}}(Y=y \mid X=x), ~\text{s.t.} ~\sum_{a_k\in\mathcal{A}_k} b_{k, a_k} = 1~(0 \leqslant b_{k, a_k} \leqslant 1),~\forall k \in [K].
        \end{aligned}
        \label{optiz_problem}
    \end{equation}
    }
\end{definition}
% \begin{definition}[Counterfactual Explanations]
%     Given an input $x$ which has an incorrect model prediction.
%     Denote $\mathbf{b} := \{\mathbf{b_k}\}_{k\in [K]}$, $\mathbf{b_k} := \left(\ldots~b_{k, a_k}~\ldots\right)_{a_k\in\mathcal{A}_k}^{\top}$, and $\Pr_{\mathbf{b}}(Y=y \mid X=x) := \sum_{a_{1:K}} \Pr_w\left(Y=y \mid A_{1:K}=a_{1:K}\right) \cdot \prod_{k=1}^K b_{k, a_k}$. In particular, let $b_{k, a_k}^{o} := \Pr_{\theta}(A_k=a_k\mid X=x)$.
%     The counterfactual explanation for the ground-truth label $y$ is a set of probability vectors $\mathbf{b}$ that maximizes $\Pr_{\mathbf{b}}(Y=y \mid X=x)$ while being close to the original probability vectors $\mathbf{b^o}$, \ie, it is the solution to the following optimization problem:
%     {\small
%     \begin{equation}
%         \begin{aligned}
%             \max_{\mathbf{b}} \operatorname{Pr}_{\mathbf{b}}(Y=y \mid X=x), ~\text{s.t.} \sum_{k=1}^K\Vert \mathbf{b_k} - \mathbf{b_k^o} \Vert_1 \leqslant \varepsilon;~\sum_{a_k\in\mathcal{A}_k} b_{k, a_k} = 1~(0 \leqslant b_{k, a_k} \leqslant 1),~\forall k \in [K].
%         \end{aligned}
%         \label{optiz_problem}
%     \end{equation}
%     }
%     where $\varepsilon$ indicates the constraint over the change of $\mathbf{b}$.
% \end{definition}
We adopt the projected gradient ascent algorithm to generate the CEs, which is detailed in Algorithm \ref{alg:ce_optimization}.

\begin{minipage}{5.5in}
\begin{algorithm}[H]
\caption{Generation of Counterfactual Explanations}
\label{alg:ce_optimization}
\begin{algorithmic}[1]
\State \textbf{Input:} Feasible region $\mathcal{C} := \{\mathbf{b}: \sum_{a_k \in \mathcal{A}_k} b_{k,a_k} = 1~(0 \leqslant b_{k,a_k} \leqslant 1),~\forall k \in [K]\}$, initial values $b_{k,a_k}^{(0)} := \Pr_{\theta}(A_k=a_k \mid X=x)$,~$a_k \in \mathcal{A}_k$,~$k \in [K]$, learning rate $\gamma$
% \State Define feasible region $\mathcal{C} := \{\mathbf{b}: \sum_{a_k \in \mathcal{A}_k} b_{k,a_k} = 1~(0 \leqslant b_{k,a_k} \leqslant 1),~\forall k \in [K]\}$

\For{$t = 0, 1, \ldots, T-1$}
    \Statex \quad~\# Computing $b_{k,a_k}^{(t+1)} := \mathcal{P}_{\mathcal{C}}\left\{b_{k,a_k}^{(t)}+\gamma \frac{\partial \log \Pr_{\mathbf{b}^{(t)}}(Y=y \mid X=x)}{\partial b_{k,a_k}}\right\}$~\citep{wang2013projection}
    \For{$k= 1, \ldots, K$}
        \State Sort $\{ b_{k,a_k}^{(t)}+\gamma \frac{\partial \log \Pr_{\mathbf{b}^{(t)}}(Y=y \mid X=x)}{\partial b_{k,a_k}} \}_{a_k\in\mathcal{A}_k}$ into $u_{k,1} \geqslant u_{k,2} \geqslant \ldots \geqslant u_{k,q_k}$
        \State Find $\rho_k := \sum_{j=1}^{q_k} \mathbb{I}\left( u_{k,j} > \frac{1}{j} \left( \sum_{l=1}^j u_{k,l} - 1 \right) \right)$
        \State Define $\lambda_k := \frac{1}{\rho_k} \left( 1 - \sum_{j=1}^{\rho_k} u_{k,j} \right)$
    \EndFor
    \State Let $b_{k,a_k}^{(t+1)} := \max\left\{b_{k,a_k}^{(t)} + \gamma \frac{\partial \log \Pr_{\mathbf{b}^{(t)}}(Y=y \mid X=x)}{\partial b_{k,a_k}} + \lambda_k,~0\right\}$,~$a_k\in\mathcal{A}_k$,~$k\in[K]$
\EndFor
\State \textbf{Output:} Counterfactual explanations $\mathbf{b}^{(T)}$
\end{algorithmic}
\end{algorithm}
\end{minipage}

CEs reveal the model's inner workings by identifying the attribute recognition changes required to correct an incorrect prediction. Similar to MPEs, these explanations deliver attribute-level insights into the model's decision-making process, thereby improving interpretability.

Next, we introduce a metric to evaluate the effectiveness of CEs in correcting the model predictions.
\begin{definition}[Correction Rate]
    The correction rate is defined as the proportion of initially incorrectly predicted samples that are corrected by CEs among all initially incorrectly predicted samples.
\end{definition}
A high correction rate signifies that the generated CEs effectively correct model predictions by adjusting the attribute recognition results.

% During evaluation, we define the concept of correctness to measure the effectiveness of the proposed CEs.

% \begin{definition}[Correctness of CEs]
% An CE $\mathbf{b^*}$ is considered correct if it predicts the ground-truth class $y$, \ie, $y = \arg\max_{\tilde{y}\in\mathcal{Y}} \operatorname{Pr}_{\mathbf{b^*}}(Y=\tilde{y} \mid X=x)$.
% \end{definition}

% A higher correctness score indicates that the proposed CEs are more effective in leading to correct predictions.

\section{Experiments} \label{sec:exp}
\subsection{Experimental Setup} \label{sec:setup}
\paragraph{Datasets}
We evaluate the model performance on a variety of benchmark datasets.
\textbf{1) MNIST-Addition:} We derive this dataset from the original MNIST dataset~\citep{mnist} by following the general preprocessing steps and procedures detailed in~\citep{DeepProbLog}. Each MNIST-Addition sample consists of two images randomly selected from the original MNIST. The digits in these images, ranging from 0 to 9, represent two attributes, with their sum serving as the class label. A total of 35,000 samples are created for MNIST-Addition.
\textbf{2) GTSRB:} GTSRB~\citep{gtsrb} is a dataset comprising 39,209 images of German traffic signs, with class labels indicating the type of signs. Additionally, we annotate each sample with four attributes: ``color'', ``shape'', ``symbol'', and ``text''. The values for these attributes are detailed in Appendix \ref{sec:configuration}.
\textbf{3) CelebA:} CelebA~\citep{celeba} consists of 202,599 celebrity face images annotated with 40 binary concepts. Here, we select the 8 most balanced binary concepts\footnote{Certain concepts are excluded due to political or ethical concerns.} and group them into 5 attributes: ``mouth'', ``face'', ``cosmetic'', ``hair'', and ``appearance''. Following \citet{cem}, each unique combination of concept values is treated as a group. To balance the dataset and increase its complexity, we rank these groups by the number of images they contain and pair them strategically: the group with the most images is merged with the one with the fewest, the second most with the second fewest, and so on. The above strategy results in 127 total classes.
\textbf{4) AwA2:} AwA2~\citep{awa2} contains 37,322 images of 50 types of animals, each annotated with 85 binary concepts. Certain concepts, such as those describing non-visual properties (\eg, ``fast'', ``domestic'') or indistinctive features (\eg, ``chewteeth''), or those representing background information (\eg, ``desert'' and ``forest''), are excluded. After the exclusion, 29 concepts remain, which are then grouped into 4 attributes: ``color'', ``surface'', ``body'', and ``limb''. The specific values for these attributes are provided in Appendix \ref{sec:configuration}.
For all datasets, we split the samples into training, validation, and testing sets by a ratio of 8:1:1.

\paragraph{Baselines}
We select CBM~\citep{cbm} and several representative variants as baselines. Specifically, we choose CEM~\citep{cem}, a method that uses high-dimensional concept embeddings as the bottleneck instead of concept probabilities, and DCR~\citep{dcr}, which introduces a deep concept reasoner as the task predictor rather than relying on a simple linear layer. Additionally, we train an end-to-end DNN~\citep{resnet} as an additional baseline. It is important to note that CEM and the end-to-end DNN are not interpretable, as their components are not explicitly understandable by humans, even though they may achieve competitive performance on downstream tasks. A comparison of model properties is summarized in Table \ref{tab:property}.
Detailed descriptions of model architectures and training details are deferred to Appendix \ref{sec:more_setup}.

\begin{table}[tb]
    \centering
    \caption{ Model properties. ``Interpretability'' indicates whether the outputs produced by a concept/attribute recognition model are interpretable and whether humans can interpret the final decisions using these outputs. ``Data-Driven Rules'' denotes whether a model can incorporate logical rules learned from data. ``Human-Predefined Rules'' specifies whether a model can integrate logical rules predefined by humans. ``Theoretical Guarantee'' indicates whether a model provides a theoretical guarantee on the relationship between the performance of the overall model and that of its individual components.
    }
    \label{tab:property}
    \resizebox{5.7in}{!}{%
    \begin{tabular}{lcccc}
        \toprule
        \multirow{ 2}{*}{\textbf{Model}} & \multirow{ 2}{*}{\textbf{Interpretability}} & \textbf{Data-Driven} & \textbf{Human-Predefined} & \textbf{Theoretical} \\
        & & \textbf{Rules} & \textbf{Rules} & \textbf{Guarantee} \\
        \midrule
        % DNN~\citep{resnet}      & $\circ$ & $\circ$ & $\circ$ & $\circ$ \\
        CBM~\citep{cbm}         & \tick & \cross & \cross & \cross \\
        CEM~\citep{cem}         & \cross & \cross & \cross & \cross \\
        DCR~\citep{dcr}         & \tick & \tick & \cross & \cross \\
        \methodname~(ours)      & \tick & \tick & \tick & \tick \\
        \bottomrule
    \end{tabular}}
\end{table}

\paragraph{Evaluation Metrics}
Considering the compositional nature of \methodname s, we introduce distinct evaluation metrics for the various modules as well as the overall model.
\textbf{1) Attribute Recognition Model:} 
We employ two metrics for evaluating the attribute recognition model. First, we propose the \textit{mean total variation (TV) distance} between the output probability vectors from the attribute recognition model and the corresponding ground-truth probability vectors. The metric is defined as: $\frac{1}{K}\sum_k\frac{1}{|D_{\text{test}}|}\sum_{x \in D_{\text{test}}} d_{\text{TV}}(f_k(x;\theta), g_k(x))$, where a smaller distance indicates better performance. For baseline models producing concept probabilities, we group the outputs into attributes and process them as probability vectors. 
Second, we adapt the \textit{mean concept accuracy} metric used in~\citep{cem} for \methodname s. For an input $x$ with $n_k$ ground-truth values for the attribute $A_k$, we identify the top-$n_k$ values in $f_k(x; \theta)$ and set them to one while the rest are set to zero. We then calculate the accuracy by comparing these processed outputs with the ground-truth concept labels.
\textbf{2) Circuit-Based Task Predictor:} 
We use the \textit{mean likelihood} to measure the performance of circuit-based task predictors, which is given by: $\frac{1}{|\bar{D}_{\text{test}}|}\sum_{(y, a_{1:K}) \in \bar{D}_{\text{test}}} \Pr_w(Y=y, A_{1:K}=a_{1:K})$. A larger mean likelihood suggests that the circuit models the joint distribution more accurately.
\textbf{3) Overall Model:} 
We adopt the \textit{standard classification accuracy} as the evaluation metric for the overall model.

\subsection{\methodname s vs. Baselines}
We compare \methodname s against baseline models across the four benchmark datasets, with the results summarized in Table \ref{tab:performance_comparison}. Specifically, we refer to the \methodname~whose circuit was learned using the data-driven approach as ``\methodname (Data)'' and the \methodname~whose circuit was constructed using the knowledge-injected approach as ``\methodname (Knowledge)''.

\begin{table}[tb]
  \centering
  \caption{Classification accuracy of \methodname s and four baseline models on four benchmark datasets over five random seeds. ``*'' denotes uninterpretable models. The best results for each dataset are highlighted in bold, while the second-best results are underlined.}
  \resizebox{5.7in}{!}{%
  \begin{tabular}{lcccc}
    \toprule
    Model & MNIST-Add (\%) & GTSRB (\%) & CelebA (\%) & AwA2 (\%) \\
    \midrule
    $\text{DNN}^*$ & $99.057 \pm 0.08$ & $\underline{99.939} \pm 0.04$ & $\mathbf{36.963} \pm 0.72$ & $\mathbf{93.351} \pm 0.17$ \\
    CBM        & $98.606 \pm 0.03$ & $99.810 \pm 0.04$ & $16.552 \pm 0.87$ & $82.286 \pm 0.47$ \\
    $\text{CEM}^*$        & $98.740 \pm 0.10$ & $99.736 \pm 0.06$ & $25.218 \pm 0.30$ & $85.102 \pm 0.27$ \\
    DCR        & $94.597 \pm 2.05$ & $87.071 \pm 6.93$ & $7.055 \pm 3.04$  & $44.117 \pm 10.05$ \\
    \methodname (Data) & $\underline{99.171} \pm 0.11$ & $99.888 \pm 0.08$ & $\underline{33.739} \pm 0.90$ & $\underline{87.281} \pm 0.39$ \\
    \methodname (Knowledge)   & $\mathbf{99.189} \pm 0.08$ & $\mathbf{99.944} \pm 0.04$ & $31.727 \pm 0.51$ & $68.519 \pm 3.54$ \\
    \bottomrule
  \end{tabular}}
  \label{tab:performance_comparison}
\end{table}

The results in Table \ref{tab:performance_comparison} demonstrate that NPCs outperform all other concept-based baseline models. Specifically, \methodname (Knowledge) achieves the best performance on the MNIST-Addition and GTSRB datasets, while \methodname (Data) leads on the CelebA and AwA2 datasets. Notably, \methodname s achieve superior performance even compared to CEM, an uninterpretable model that relies on high-dimensional concept embeddings, highlighting \methodname's effectiveness in leveraging interpretable concept probabilities for downstream classification tasks.

Remarkably, \methodname s achieve superior performance even compared to that of the end-to-end DNN, surpassing its classification accuracy on the MNIST-Addition and GTSRB datasets. Nevertheless, small gaps remain between the end-to-end DNN and \methodname  on more complex datasets like CelebA and AwA2. The above findings demonstrate that, while there is still small room for improvement compared to black-box models, NPCs strike a compelling balance between interpretability and task performance. Overall, the results underscore the remarkable potential of interpretable models, demonstrating their ability to achieve competitive performance in downstream tasks compared to baselines and even black-box end-to-end DNN models.

\subsection{Ablation Studies}
In this section, we delve into \methodname s from additional perspectives. Specifically, we will analyze the integration of attributes, the influence of attribute selections, the effects of various approaches to constructing the task predictor, and, lastly, the impact of joint optimization.

\subsubsection{Attributes vs. Concepts}
Unlike existing concept-based models that utilize individual binary concepts (\eg, ``red color'', ``yellow color''), \methodname s focus on concept groups, \ie, attributes (\eg, ``color''). Here, we aim to explore \textit{the benefits of using attributes compared to individual concepts}. To this end, we replace the concept recognition model in CBM~\citep{cbm} with an attribute recognition model, resulting in a new model called the Attribute Bottleneck Model (ABM). ABM comprises an attribute recognition model and a linear layer serving as the task predictor. We employ the training loss from CBM to train ABM, replacing the concept loss with the attribute loss as defined in Equation~(\ref{eq:attribute_loss}).
The performance comparison between CBM and ABM is presented in Table~\ref{tab:attribute_concept}.

The results in Table~\ref{tab:attribute_concept} show that, in terms of the mean TV distance, ABM outperforms CBM on the MNIST-Addition and GTSRB datasets, albeit exhibiting slightly worse performance on the CelebA and AwA2 datasets. On the other hand, ABM consistently surpasses CBM in mean concept accuracy across all datasets. These findings underscore the effectiveness of the attribute recognition model, indicating that attributes capture more nuanced information. In particular, we hypothesize that inherent relationships may exist both within the values of any single attribute and across different attributes. Therefore, treating all values as independent concepts neglects these interdependencies, potentially leading to inferior performance. Overall, these results suggest the benefits of utilizing attributes in preserving relational constraints within predictions and thus improving model performance.

\begin{table}[tb]
  \centering
  \caption{Comparison between CBM and ABM across four benchmark datasets in terms of mean TV distance and mean concept accuracy. For TV distance, lower values are better. For accuracy, higher values are better. The better values are in bold.}
  \resizebox{5in}{!}{%
  \begin{tabular}{clcccc}
    \toprule
    Metric & Model & MNIST-Add & GTSRB & CelebA & AwA2 \\
    \midrule
    \multirow{2}{*}{Mean TV Distance} 
      & CBM & 0.0080 & 0.0023 & \textbf{0.1744} & \textbf{0.0514} \\
      & ABM & \textbf{0.0058} & \textbf{0.0007} & 0.1759 & 0.0775 \\
    \midrule
    \multirow{2}{*}{Mean Concept Accuracy} 
      & CBM & 98.64\% & 99.80\% & 32.52\% & 82.21\% \\
      & ABM & \textbf{98.99\%} & \textbf{99.84\%} & \textbf{45.00\%} & \textbf{89.69\%} \\
    \bottomrule
  \end{tabular}}
  \label{tab:attribute_concept}
\end{table}

\subsubsection{Impact of Attribute Selection}
During inference, an \methodname~utilizes sufficient attributes to produce final predictions. Here, we aim to investigate the following questions: \textit{How does the exclusion of one particular attribute during inference impact the performance of \methodname~on downstream tasks?} \textit{How does the exclusion of different attributes vary the impact?}

Assuming the excluded attribute is $A_1$, the predicted score for the class $y$ would become $\sum_{a_{2:K}}\Pr_w(Y=y\mid A_{2:K}=a_{2:K})\cdot\prod_{k=2}^K\Pr_\theta(A_k=a_k\mid X)$. We apply this inference process on the GTSRB dataset and the resulting task performance is presented in Figure \ref{fig:reduce} (Left).

% \begin{figure}[htbp]
% \centering
% \begin{minipage}{0.48\textwidth}
%     \centering
%     \includegraphics[width=\textwidth]{} % Replace with your image path
%     % \caption{Caption for Figure 1}
%     % \label{fig:figure1}
% \end{minipage}
% \hfill
% \begin{minipage}{0.48\textwidth}
%     \centering
%     \includegraphics[width=\textwidth]{} % Replace with your image path
%     % \caption{Caption for Figure 2}
%     % \label{fig:figure2}
% \end{minipage}
% \caption{\methodname~classification accuracy on the GTSRB and MNIST-Addition datasets with attribute exclusions. The gray bars indicate inference with all attributes, while non-gray bars indicate inference with one particular attribute excluded.}
% \label{fig:reduce}
% \end{figure}

\begin{figure}[htbp]
\centering
\includegraphics[width=5.7in]{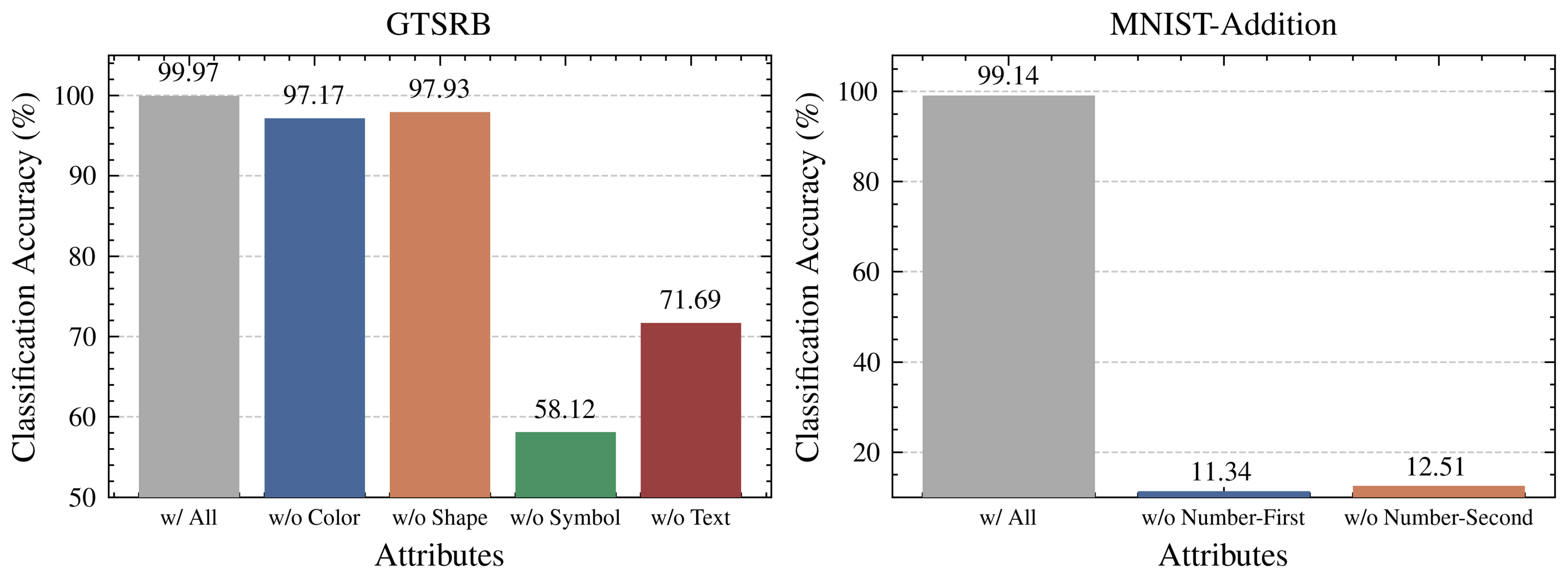}
\caption{\methodname~classification accuracy on the GTSRB and MNIST-Addition datasets with attribute exclusions. The gray bars indicate inference with all attributes, while non-gray bars indicate inference with one particular attribute excluded.}
\label{fig:reduce}
\end{figure}

We observe that, when \methodname~has one fewer attribute at its disposal during inference, the classification accuracy decreases, due to the fact that the assumption of sufficient attributes (Assumption \ref{assumption1}) no longer holds in this scenario, and that the equality in Equation (\ref{eq:inference}) was violated. Therefore, the formula above does not correctly represent $\Pr_{\theta,w}(Y=y\mid X)$. As a result, relying on the above formula for inference can adversely affect the predictions.

On the other hand, we observe that excluding different attributes can lead to varying impacts on the task performance. Specifically, excluding the ``color'' or ``shape'' attributes results in only a slight drop in accuracy, whereas excluding the ``symbol'' or ``text'' attributes leads to a significant decline. We accredit such discrepancy to the distinct properties of these attributes. 
More specifically, attributes such as ``color'' and ``shape'' are generally not decisive, meaning that they do not decisively determine the final class, and that their absence can be compensated by leveraging information from the other attributes. For instance, even without having any indication of ``red'' or ``octagon'', it is still possible to infer an input representing a stop sign so long as the ``text'' attribute indicates ``stop''.  Thus, excluding the non-decisive attributes has minimal impact on the performance.
In contrast, ``symbol'' and ``text'' attributes are decisive for many instances and thus are crucial for distinguishing between certain classes. For example, without the ``text'' attribute, it then becomes impossible to differentiate speed limit signs indicating different speeds. Similarly, without the ``symbol'' attribute, distinguishing between a left-curve and a right-curve sign is no longer feasible. Hence, excluding decisive attributes can severely impair predictions.

For the MNIST-Addition dataset, Figure \ref{fig:reduce} (Right) demonstrates similar findings. In particular, as both attributes are essential for determining the final class, \ie, the sum, the removal of either attribute results in a substantial performance decline.

In summary, utilizing insufficient attributes compromises \methodname's performance on downstream tasks, and the impact of excluding different attributes varies depending on the properties of the attributes themselves.

\subsubsection{Impact of Task Predictor Construction Approaches}
In Section \ref{sec:relation_learning}, we introduce two distinct approaches for constructing probabilistic circuits: the data-driven approach and the knowledge-injected approach. Here, our objective is to investigate the impact of these construction methods. Specifically, we aim to address the following questions: \textit{Which approach produces a circuit that better captures the data distribution? Which approach produces a circuit that performs more effectively as a task predictor?}
We start by comparing the mean likelihood of the two circuits. Then, we examine the classification accuracy of the overall models consisting of a well-trained attribute recognition model together with either circuit (data driven or knowledge injected). The results are summarized in Table \ref{tab:circuit}. 

\begin{table}[htbp]
  \centering
  \caption{Performance comparison between circuits constructed using data-driven and knowledge-injected approaches, along with the classification accuracy of overall models combining either circuit with a well-trained attribute recognition model. The better results are in bold.}
  \resizebox{5.7in}{!}{%
  \begin{tabular}{llcccc}
    \toprule
    \textbf{Metric} & \textbf{Architecture} & \textbf{MNIST-Add} & \textbf{GTSRB} & \textbf{CelebA} & \textbf{AwA2} \\
    \midrule
    \multirow{2}{*}{Mean Likelihood} 
    & Circuit (Data)      & \textbf{1.010e-2} & \textbf{3.663e-02} & 2.082e-02          & \textbf{1.263e-04} \\
    & Circuit (Knowledge) & 1.007e-2          & \textbf{3.663e-02} & \textbf{2.091e-02} & 1.255e-04          \\
    \midrule
    \multirow{2}{*}{Accuracy} 
    & Model (Data)        & \textbf{99.17\%}  & \textbf{99.85\%} & \textbf{32.74\%} & \textbf{68.52\%} \\
    & Model (Knowledge)   & \textbf{99.17\%}  & \textbf{99.85\%} & 31.56\%          & 23.47\%          \\
    \bottomrule
  \end{tabular}}
  \label{tab:circuit}
\end{table}

In terms of classification accuracy, the models combining the different circuits exhibit similar performance on MNIST-Addition, GTSRB, and CelebA datasets. Such similarity suggests that the two-layered circuit constructed using the knowledge-injected approach is sufficient to provide accurate relational information among attributes and classes for simpler datasets.

In contrast, for the more complex AwA2 dataset, Model (Data) outperforms Model (Knowledge) by a wide margin due to the presence of multi-valued attributes in AwA2, which results in a large number of valid combinations of attribute values. Consequently, each combination may correspond to a very small joint probability. Under these circumstances, even a slight difference in mean likelihood may induce a significant change in the circuit's ability to capture the data distribution. For instance, even a marginally lower mean likelihood might indicate the failure of the circuit to capture the joint probabilities of a large quantity of combinations.
As shown in Table \ref{tab:circuit}, the mean likelihood of Circuit (Knowledge) is slightly lower than that of Circuit (Data) for the AwA2 dataset, suggesting that, in this case, the knowledge-injected circuit might have failed to capture sufficient nuanced relationships among attributes and classes, while the data-driven circuit is better suited for representing the joint distribution of the data, thus leading to a better performance on the downstream task.

\subsubsection{Impact of Joint Optimization}
% % for each paragraph: question first, analysis then, conclusion last
% compare performance before and after joint training
% conclusion: joint training could boost performance
% ?notice attribute accuracy
% conclusion: joint training might compromise attribute accuracy on certain datasets, thus compromising model interpretability 
When training the \methodname s, we adopt a three-stage training algorithm, where we first independently train the attribute recognition model and the task predictor, followed by a joint optimization for the overall model. Here, we aim to investigate \textit{how the third stage, \ie, joint optimization, affects the performance of \methodname s}. To this end, we compare the performance of \methodname s before and after applying the joint optimization. The comparison is illustrated in Figure \ref{fig:joint}.

% \begin{figure}[htbp]
% \centering
% \begin{minipage}{0.48\textwidth}
%     \centering
%     \includegraphics[width=\textwidth]{}
% \end{minipage}
% \hfill
% \begin{minipage}{0.48\textwidth}
%     \centering
%     \includegraphics[width=\textwidth]{}
% \end{minipage}
% \caption{Classification accuracy of \methodname (Data) and \methodname (Knowledge) on the four benchmark datasets. The blue bars indicate performance prior to joint optimization, while the orange bars illustrate performance after applying joint optimization.}
% \label{fig:joint}
% \end{figure}
\begin{figure}[htbp]
\centering
\includegraphics[width=5.7in]{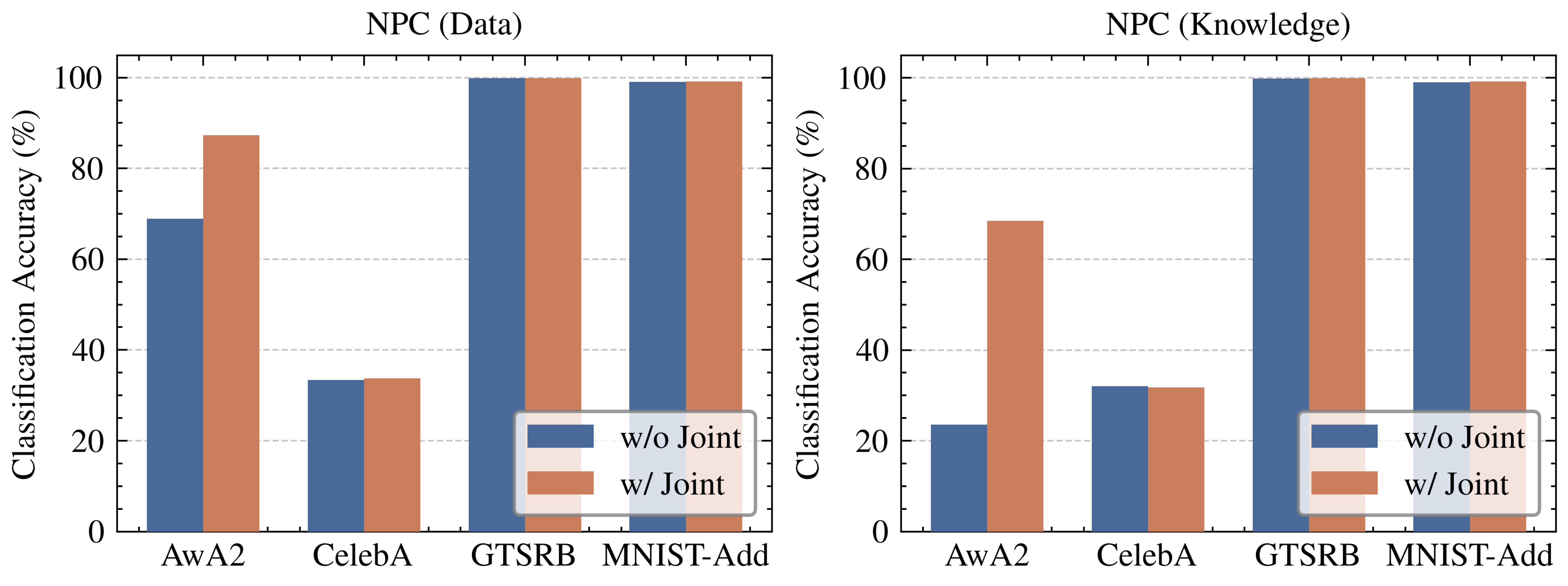}
\caption{Classification accuracy of \methodname (Data) and \methodname (Knowledge) on the four benchmark datasets. The blue bars indicate performance prior to joint optimization, while the orange bars illustrate performance after applying joint optimization.}
\label{fig:joint}
\end{figure}

In general, joint optimization enhances model performance across the various datasets. Specifically, for the AwA2 dataset, joint optimization leads to substantial improvements for both \methodname (Data) and \methodname (Knowledge), demonstrating its effectiveness. In contrast, for the CelebA dataset, the performance stays roughly the same after applying joint optimization, with a marginal increase in performance for \methodname (Data) and a marginal decrease in performance only for
\methodname (Knowledge). Nevertheless, slight performance improvements are generally observed on the GTSRB and MNIST-Addition datasets for both \methodname s. Overall, these results suggest that joint optimization provides additional benefits for \methodname s in terms of downstream tasks. In particular, for datasets with superior performance after the initial training, joint optimization provides additional but modest gains. In contrast, for datasets with moderate performance after the initial training, joint optimization plays a pivotal role in delivering significant performance improvements.

\subsection{Model Explanations}
In this section, we explore two types of explanations and provide examples to illustrate how the explanations facilitate the human's understanding of \methodname 's inner workings and interpret the model's behavior.

\subsubsection{Most Probable Explanations}
Figure \ref{fig:mpe_learn} presents some examples for \methodname (Data) from the four benchmark datasets. Specifically, each example comprises an image, the ground-truth labels for the class and attributes, the class predicted by \methodname (Data), and, lastly, the corresponding MPE which accounts for the attribute assignment that contributes most significantly to the prediction. In these instances, \methodname (Data) provides correct class predictions, and the MPEs are aligned with the ground-truth attribute labels. For instance, the MPE for the example from GTSRB is \{Color: Red; Shape: Circle; Symbol: Text; Text: 30\}, which precisely matches the ground-truth attribute labels. Such alignment between MPEs and attribute labels indicates that the model employs human-like reasoning and makes reliable decisions. Examples for \methodname (Knowledge) are deferred to Appendix \ref{sec:more_explanation}.

\begin{figure}[htbp]
    \centering
    \includegraphics[width=5.7in]{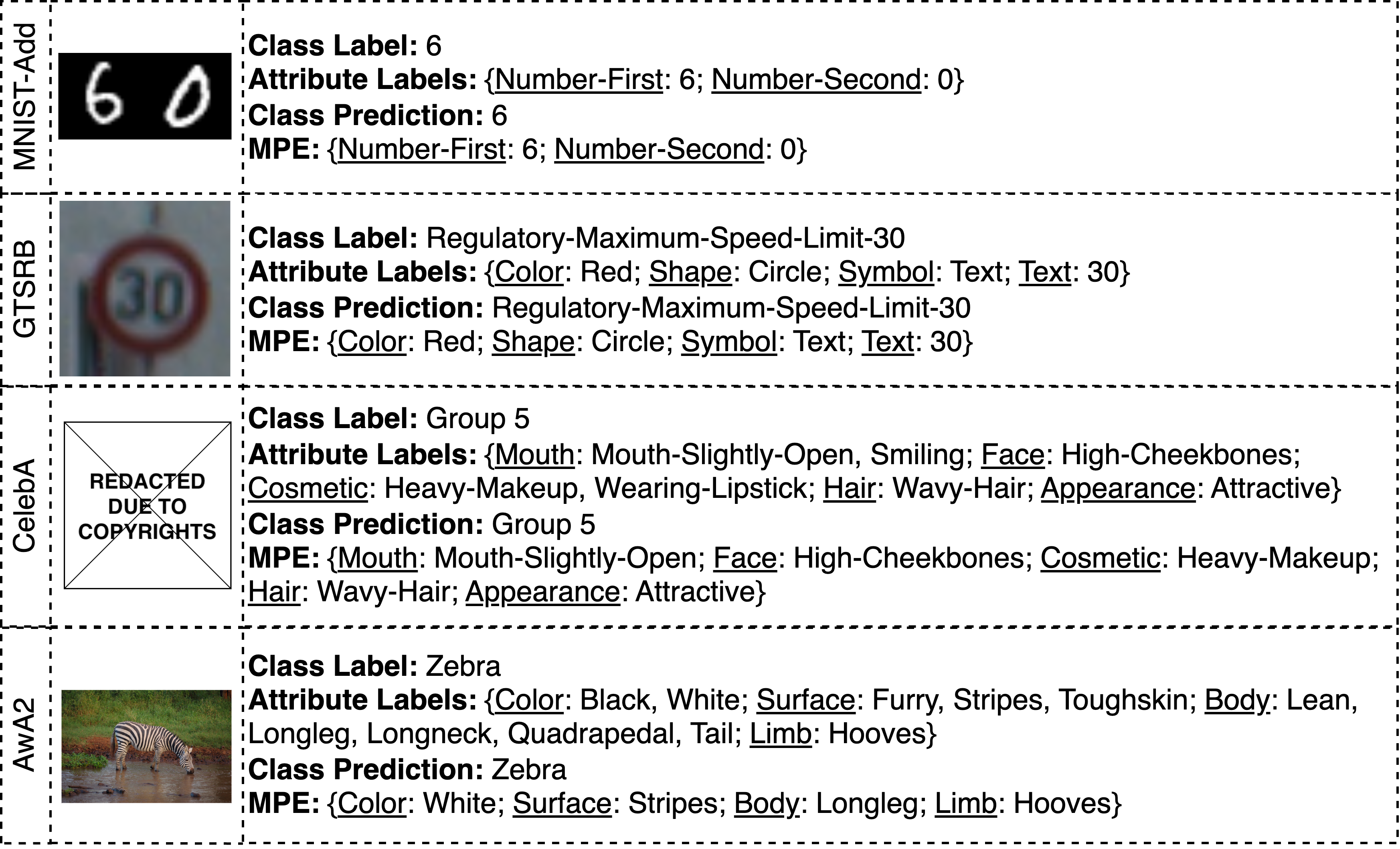}
    \caption{Examples from the four benchmark datasets. Each example includes an image, the ground-truth class label, the ground-truth attribute labels, the \methodname (Data) class prediction, and, lastly, the corresponding MPE. These examples illustrate MPEs that align with the ground-truth attribute labels. The CelebA image is redacted in compliance with its terms of use.}
    \label{fig:mpe_learn}
\end{figure}

The MPEs alignment rates are shown in Figure \ref{fig:explanation} (Left). We observe that \methodname (Knowledge) exhibits a relatively low alignment rate on the AwA2 dataset, suggesting that the ground-truth attribute assignment does not contribute the most to the correct predictions for certain instances. Such misalignment could be caused by the knowledge-injected approach failing to capture the relatively more complex conditional dependencies between classes and attributes for this particular dataset, thereby affecting the model's prediction process. On the other hand, the MPEs alignment rates in the other scenarios are close to 100\%, indicating that when it makes correct predictions, the model predominantly relies on attribute assignments matching the ground truths. Therefore, the model is considered reliable as its prediction process properly aligns with the human's decision-making process.

% \begin{figure}[htbp]
% \centering
% \begin{minipage}{2.9in}
%     \centering
%     \includegraphics[width=0.8\textwidth]{}
% \end{minipage}
% \hfill
% \begin{minipage}{2.9in}
%     \centering
%     \includegraphics[width=0.8\textwidth]{}
% \end{minipage}
% \caption{MPE alignment rate and CE correction rate for \methodname s across the four benchmark datasets. The blue bars represent \methodname (Data), while the orange bars represent \methodname (Knowledge). Error bars indicate the standard deviation over five random seeds.}
% \label{fig:explanation}
% \end{figure}
\begin{figure}[tb]
\centering
\includegraphics[width=5.7in]{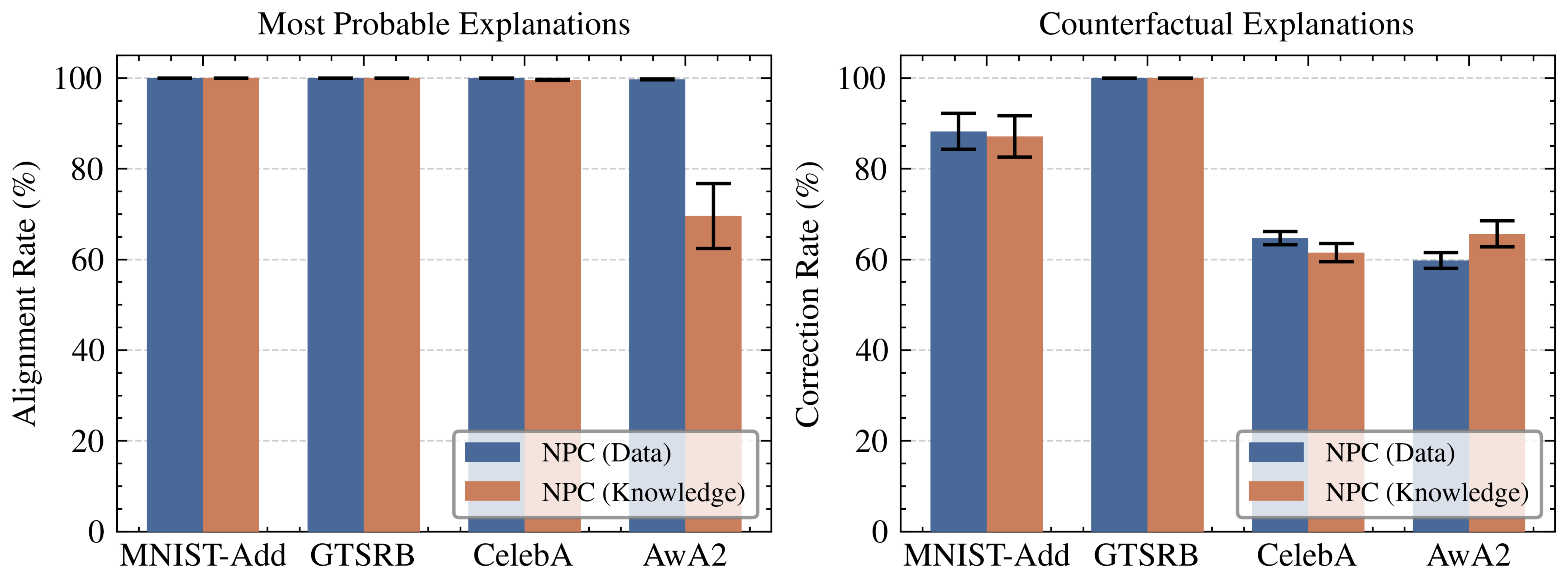}
\caption{MPE alignment rate and CE correction rate for \methodname s across the four benchmark datasets. The blue bars represent \methodname (Data), while the orange bars represent \methodname (Knowledge). Error bars indicate the standard deviation over five random seeds.}
\label{fig:explanation}
\end{figure}

\subsubsection{Counterfactual Explanations}
Figure \ref{fig:ce_learn} illustrates some examples for \methodname (Data) from the four benchmark datasets. Each sample consists of an image, the attribute and class predictions, the generated CE, and, lastly, the class prediction corrected by the CE.
In these examples, \methodname (Data) incorrectly predicts the class, while CEs effectively correct them by making minimal adjustments to the attribute predictions. For instance, for the MNIST-Addition dataset, changing the ``Number-Second'' attribute from ``4'' to ``9'' corrects the class prediction from ``11'' to ``16''. Similarly, for the GTSRB dataset, the predicted class is corrected by adjusting the ``Text'' attribute prediction. For the CelebA dataset, the CE yields the correct class prediction primarily by increasing the probability of ``Attractive'' from the ``Appearance'' attribute. Finally, for the AwA2 dataset, the CE corrects the class prediction from ``Horse'' to ``Deer'' primarily by increasing the probability of ``Brown'' for the ``Color'' attribute. Examples for \methodname (Knowledge) are provided in Appendix~\ref{sec:more_explanation}.

\begin{figure}[htbp]
    \centering
    \includegraphics[width=5.7in]{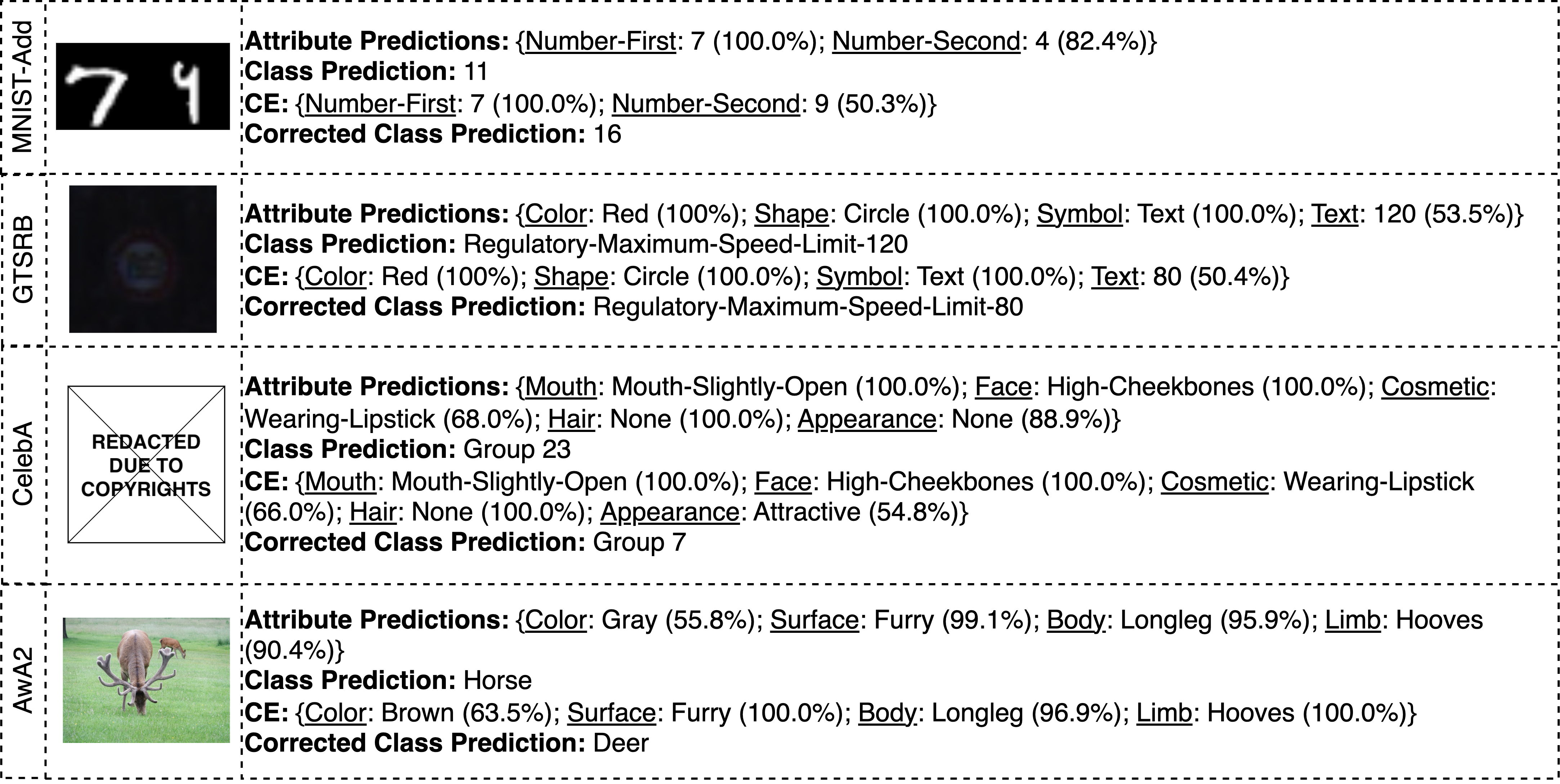}
    \caption{Examples from the four benchmark datasets. Each example includes an image, the \methodname (Data) attribute and class predictions, the corresponding CE, and the class prediction corrected by CE. These examples focus on CEs that successfully correct the class predictions. For simplicity, both attribute predictions and CEs display only the most confident value for each attribute. The CelebA image is redacted in compliance with its terms of use.}
    \label{fig:ce_learn}
\end{figure}

The CE correction rates are presented in Figure \ref{fig:explanation} (Right). For simpler datasets like MNIST-Addition and GTSRB, the generated CEs exhibit high correction rates, indicating their effectiveness in correcting predictions. However, for more complex datasets such as CelebA and AwA2, the correction rates are lower, highlighting the limit of the efficacy of the generated CEs on complex datasets. Such a limit underscores the future need for more advanced algorithms for generating CEs that can correct misclassifications with excellent correction rates, even for complex datasets.

\section{Limitations and Discussions} \label{sec:discuss}
In this section, we discuss the limitations of \methodname s from multiple perspectives, highlighting potential future directions for improvement.
\paragraph{Model Architecture}
Compared to end-to-end DNNs, \methodname s offer superior interpretability by decomposing a model into semantically meaningful modules, enabling humans to combine module outputs to understand the final decisions. Nevertheless, the attribute recognition model itself remains a black box, and its opaque inner workings make it difficult to ensure that its outputs truly represent the probabilities for the various attributes. For instance, the model might learn spurious correlations and incorrectly map background features, instead of actual attributes, to outputs. Future work may focus on increasing the transparency within the attribute recognition model, thereby enhancing its interpretability.

\paragraph{Structure of Probabilistic Circuits}
In \methodname s, the task predictor, implemented using a probabilistic circuit, is either learned using LearnSPN~\citep{learnspn} or manually constructed based on human-predefined rules. The circuit generated by LearnSPN, however, may contain an excessive number of nodes and edges, resulting in a slower inference. Alternative methods~\citep{VergariME15, alternative_spn} may be explored to create more compact circuits for added inference efficiency. On the other hand, manually constructed circuits employ simpler structures with only two layers. While it may improve efficiency, such simplicity may limit the circuit’s expressiveness, potentially degrading its performance on complex datasets like AwA2. Future work may focus on improved balancing between circuit expressiveness and structural complexity.

\paragraph{Assumption of Complete Information}
The assumption of complete information in this paper assumes that attributes are conditionally independent given the input. However, recent studies suggest that this assumption limits model expressiveness~\citep{condition_assump} and potentially introduces spurious correlations, also known as reasoning shortcuts~\citep{reasoning_shortcut1, reasoning_shortcut2}. Investigating whether this issue arises in \methodname s and identifying ways to address it would be a valuable direction for future research.

% \paragraph{Error Analysis in More Challenging Scenarios}
% We base our error analysis for \methodname s on Assumptions \ref{assumption1} and \ref{assumption2}, which are generally applicable. However, these assumptions may be restrictive in certain scenarios. For instance, in the context of medical analysis, attributes are often insufficient, as critical data may be unavailable or withheld due to limitations in record-keeping or data accessibility during collection. Moreover, in cases where images are of low quality, such as those captured in low-light environments, it may be difficult for models to reliably infer values for the various attributes from images. As a result, the assumption of conditional mutual independence among attributes may not hold in these contexts. Moving forward, we aim to develop error analysis for \methodname s under such challenging conditions, exploring how the performance of various modules affects that of the overall model. 

\paragraph{Reducing Trade-Offs between Interpretability and Task Performance}
In this paper, we show that, with the integration of attribute recognition and probabilistic circuit, \methodname~produces interpretable predictions for downstream tasks while achieving superior performance. Looking ahead, we believe that, by incorporating more fine-grained and diverse attributes that are semantically meaningful, along with a structure that reasons over these attributes using logical rules with increased complexity, we shall devise compositional model designs that further reduce the trade-offs between interpretability and performance of downstream tasks.

% \paragraph{Optimization Algorithm}
% For joint optimization, we employ the PGD algorithm to optimize the circuit's parameters. While CCCP~\citep{cccp}, which uses multiplicative parameter updates, offers better convergence and performance compared to PGD, it is challenging to adapt this generative training algorithm to the discriminative objective of joint optimization. In the future, we aim to develop discriminative training algorithms inspired by CCCP to achieve enhanced convergence and performance.

% \paragraph{Downstream Tasks}
% This paper focuses on image classification tasks. \methodname s could be extended to handle regression tasks by adapting the discrete circuit to a continuous one.

% \paragraph{Implementation}
% From an implementation perspective, the inference process described in Eq. (\ref{eq:inference}) requires saving a matrix containing all combinations of attribute values and class labels. This limits scalability when handling datasets with large numbers of attributes and values. Optimizing the implementation will be crucial to improving scalability and efficiency.

\section{Related Work} \label{sec:related}
In this section, we discuss several areas of research relevant to our
proposed method.

\paragraph{Concept Bottleneck Models and Variants}
Concept bottleneck models (CBMs) and their variants are a class of machine learning models that organize their decision-making process around high-level, human-understandable concepts, offering enhanced transparency. First introduced by~\citet{cbm}, CBMs decompose a black-box DNN into two modules: a concept recognition model, responsible for predicting various human-specified concepts, and a task predictor, which performs classifications on the predicted concepts.

Subsequent research has focused on improving these two modules. \citet{cem, concept_emb1, concept_emb2} extend the concept recognition model by representing concepts as high-dimensional embeddings rather than simple probabilities. Additionally, \citet{hybrid_cbm, concept_unsupervised1, concept_unsupervised2, concept_unsupervised3} introduce unsupervised neurons into the bottleneck to enhance the model's learning capacity. While improving the performance on downstream tasks, these extensions compromise interpretability, as the dimensions within concept embeddings and the unsupervised neurons lack explicit semantic meanings. In contrast, utilizing predicted concept probabilities gives better interpretability.
On the other hand, there have been recent efforts to improve the interpretability of the task predictor. Instead of using a linear layer as the task predictor, several approaches~\citep{dcr, len, soft_tree_cbm} design new architectures to embed logical rules and enable classifications via reasoning. For instance, \citet{dcr} propose a deep concept reasoner, while \citet{soft_tree_cbm} introduce a soft decision tree as the task predictor. These approaches optimize their parameters using observed data, thus extracting the underlying logical rules inherent within the data.
In comparison, architectures that directly encode human-predefined logical rules through their structure and parameters offer means to explicitly represent domain knowledge.

\paragraph{Probabilistic Circuits}
Probabilistic circuits~\citep{spn_survey} are rooted directed acyclic graphs designed to represent the joint distribution of a set of variables. The circuits comprise three types of nodes: 1) leaf nodes, which correspond to input variables; 2) sum nodes, which compute weighted sums of their child nodes; and 3) product nodes, which compute products of their child nodes. When satisfying the properties of decomposability and smoothness, a probabilistic circuit becomes a tractable probabilistic model, ensuring efficient inferences over various distributions~\citep{spn}. Specifically, joint, marginal, and conditional probabilities of input variables can be computed in at most two passes (from leaf nodes to the root node), with computational complexity linear in the size of the circuit. Consequently, probabilistic circuits combine the expressiveness of traditional graphical models with the scalability of modern deep learning frameworks.

Structure learning for probabilistic circuits aims to design structures that effectively balance expressiveness and computational efficiency. ~\citet{spn_structure_survey} categorize existing structure learning methods into four types: 1) handcrafted structure learning, where structures are manually designed for specific datasets~\citep{discriminative_spn, spn}; 2) data-based structure learning, which uses heuristic~\citep{AdelBG15, dennis2012learning, learnspn, krakovna2016minimalistic, mix_spn, merge_spn, RooshenasL14, VergariME15} or non-heuristic algorithms~\citep{selective, lee2014non, trapp2016structure, PeharzVS0TSKG19} to learn structures from data; 3) random structure learning, where structures are randomly generated as a flexible starting point~\citep{PeharzVS0TSKG19, RashwanZP16, bayesian_spn}; and 4) ensemble structure learning, which combines multiple structures to improve generalization to high-dimensional data~\citep{ventola2020residual}. In this paper, we utilize the first and second types of structure learning approaches to embed explicit and implicit logical rules, respectively.

Parameter learning for probabilistic circuits involves finding optimal parameters for a given structure, enabling the circuits to accurately capture the underlying probability distributions within the observed data. Parameter learning can be broadly categorized into two types: generative and discriminative. Generative parameter learning~\citep{spn, peharz2015foundations, RashwanZP16, ZhaoAGA16, cccp}, the most common paradigm, aims to maximize the joint probabilities of all variables. The generative approach is particularly suited for tasks such as density estimation, generative modeling, and probabilistic reasoning. In contrast, discriminative parameter learning~\citep{discriminative_spn, AdelBG15, discriminative_extend_spn} focuses on maximizing the conditional probabilities of a class variable given other variables, making it ideal for classification and regression tasks. In this paper, we adopt CCCP~\citep{cccp}, a generative parameter learning approach, as it admits multiplicative parameter updates that provide a monotonic increase in the log-likelihood and lead to faster and more stable convergence.

On the other hand, some works develop probabilistic circuits parameterized by neural networks.
% , such as using an amortized neural network to output the weights in circuits~\citep{semantic_layer}, or using the so-called gate functions to learn a conditional circuit by taking the embeddings from neural networks as input~\citep{conditional_pc}. 
For instance, \citet{semantic_layer} use an amortized neural network to output the weights in circuits.
To address the overfitting problem of large probabilistic circuits, \citet{hyper_spn} exploit the generalization ability of a small-scale neural network and employ it to generate the parameters of a large circuit.
\citet{conditional_pc} aims to learn a probabilistic circuit to model the conditional distribution of target variables given input variables. In their approach, a neural network, conditioned on the input variables, is utilized to generate the circuit's parameters. This integration of neural networks enhances the expressiveness of probabilistic circuits and has been used in neuro-symbolic integration~\citep{semantic_layer, DeepProbLog}.
In this paper, the circuit's integration with the attribute recognition model can be seen as parameterizing the input distributions of the circuit, instead of the weights of the circuit. The final predictions are a result of utilizing the outputs of a circuit through the law of total probability.

\paragraph{Integration of Probabilistic Graphical Models}
Probabilistic graphical models (PGMs) are frameworks that use graphs to express conditional dependencies between variables and represent their joint probability distributions. With their expressiveness, PGMs can be integrated into the decision-making process to enhance models from various perspectives. Our work demonstrates one approach to integrating PGMs, \ie, probabilistic circuits, to improve the transparency and interpretability of a model's predictive process. In contrast, \citet{reason_yang, reason_kemlp, reason_care, reason_colep} have focused on leveraging PGMs integration to enhance the adversarial robustness of deep classification models.

\paragraph{Neuro-Symbolic Learning}
Neuro-symbolic learning integrates neural networks with symbolic representations, combining data-driven learning with symbolic reasoning to leverage the strengths of both. Variants of CBMs that embed inherent rules within the task predictor serve has one exemplifying application of neuro-symbolic learning. Beyond CBMs, this neuro-symbolic paradigm can be implemented in various other forms.
One line of research focuses on designing symbolic-based objective functions. For instance, \citet{ltn} propose objectives that maximize the satisfiability of predefined symbolic rules over a neural network's outputs. Similarly, \citet{semantic_loss, semantic_loss2} define objectives that maximize the probabilities of generating outputs aligned with symbolic rules. These objectives can also function as regularization terms alongside standard classification losses, encouraging a neural network to adhere to specific rules by optimizing its parameters accordingly.
Another line of research emphasizes the design of model architectures. For example, \citet{semantic_layer} introduce a semantic probabilistic layer, a predictive layer designed for structured-output predictions, which can be seamlessly integrated into neural networks to ensure predictions align with certain symbolic constraints.
Overall, these works ensure that learned models follow specific symbolic rules, either through the design of objective functions or the modification of model architectures. However, while these approaches enforce rule compliance, the explicit meaning of individual model components often remains unclear, raising concerns about their transparency and interoperability.

\paragraph{Knowledge Compilation}
In the field of knowledge compilation~\citep{darwiche2002knowledge}, previous studies have made efforts to compile PGMs~\citep{knowledge_compile_pgm1, knowledge_compile_pgm2} or logical formulas~\citep{knowledge_compile_pgm2, knowledge_compile_formula1, knowledge_compile_formula2} into computationally efficient structures that support various inference tasks. In this paper, the proposed knowledge-injected parameter learning approach adopts a simple compilation strategy that compiles a set of weighted AND rules into a two-layer probabilistic circuit. Specifically, each product node corresponds to an individual AND rule, while the sum node encodes the weights associated with these rules.

\section{Conclusions} \label{sec:conclude}
In this paper, we propose Neural Probabilistic Circuits (\methodname s), a novel architecture that decomposes the decision-making process into attribute recognition and logical reasoning, enabling compositional and interpretable predictions. Experimental results across four image classification datasets demonstrate that \methodname~delivers competitive performance when compared with four baseline models.
In addition, we conduct a series of ablation studies and observe the following findings:
1) Compared with individual binary concepts, utilizing attributes preserves relational constraints, thus enhancing model performance.
2) Insufficient attributes compromise \methodname's performance on downstream tasks, and the impact of excluding specific attributes varies depending on their inherent properties.
3) For simple datasets, manually constructed circuits using the knowledge-injected approach are sufficient to result in competitive performance, whereas data-driven circuits are more effective on complex datasets.
4) Joint optimization leads to further improvements in the model performance.
Furthermore, we demonstrate how the provided most probable explanations and counterfactual explanations offer attribute-level insights into the model's decision-making process, thereby enhancing the human's understanding.
Finally, we discuss the limitations and potential future research directions for \methodname s from various perspectives, providing insights for future improvements.
Our work demonstrates the potential of \methodname s to enhance model interpretability and performance by integrating semantically meaningful attributes with probabilistic circuits, offering actionable insights for future advancements in interpretable machine learning through logical reasoning.

\paragraph{Acknowledgments}
We would like to extend our gratitude to Antonio Vergari for pointing out relevant literature on probabilistic circuits and knowledge compilation, and for the discussion on the relationship between \methodname s and semantic probabilistic layers.

\bibliographystyle{plainnat}
\bibliography{references}% common bib file
%% if required, the content of .bbl file can be included here once bbl is generated
%%\input sn-article.bbl

\newpage
\begin{appendices}

\section{Joint Optimization} \label{sec:joint}
In this section, we provide additional algorithmic details for the proposed joint optimization approach. The approach aims to jointly optimize the attribute recognition model and the circuit-based task predictor in an \methodname~using the loss function defined in Equation (\ref{eq:joint}). Specifically, we employ the stochastic gradient descent algorithm to optimize the parameters of the attribute recognition model, \ie, $\theta$, and use the projected gradient descent algorithm to optimize the parameters of the circuit, \ie, $w$, to ensure their positivity.

\paragraph{Optimizing $\theta$}
Let $\eta_A$ denote the learning rate for the attribute recognition model. The parameters of the model at the $(t+1)$-th optimization step are as follows,

\resizebox{5.6in}{!}{%
\begin{minipage}{\textwidth}
\begin{align*}
\theta^{(t+1)} & = \theta^{(t)}+\eta_A \cdot\sum_{(x, y)\in D}\frac{\partial \log \operatorname{Pr}_{\theta^{(t)}, w^{(t)}} (Y=y \mid X=x)}{\partial \theta}.
\end{align*}
\end{minipage}
}
Here, $\operatorname{Pr}_{\theta^{(t)}, w^{(t)}}(Y=y \mid X=x)$ is computed using Equation (\ref{eq:inference}).

\paragraph{Optimizing $w$}
Let $\eta_C$ denote the learning rate for the circuit. The $d$-th weight at the $(t+1)$-th optimization step is as follows,

\resizebox{5.6in}{!}{%
\begin{minipage}{\textwidth}
\begin{align*}
w_d^{(t+1)} &= \mathcal{P}_{\mathbf{R}_{++}}\left\{w_d^{(t)}+\eta_C \cdot \sum_{(x, y)\in D}\frac{\partial \log \operatorname{Pr}_{\theta^{(t)}, w^{(t)}} (Y=y \mid X=x)}{\partial w_d}\right\}                                                               \\
                    &= \mathcal{P}_{\mathbf{R}_{++}}\left\{w_d^{(t)}+\eta_C \cdot \sum_{(x, y)\in D}\frac{1}{\operatorname{Pr}_{\theta^{(t)}, w^{(t)}} (Y=y \mid X=x)}\frac{\partial \operatorname{Pr}_{\theta^{(t)}, w^{(t)}} (Y=y \mid X=x)}{\partial w_d}\right\}.
\end{align*}
\end{minipage}
}
Here, $\Pr_{\theta^{(t)}, w^{(t)}} (Y=y \mid X=x)$ is computed using Equation (\ref{eq:inference}), and the gradient term is given by,

\resizebox{5.6in}{!}{%
\begin{minipage}{\textwidth}
\begin{align*}
&\frac{\partial \operatorname{Pr}_{\theta^{(t)}, w^{(t)}}\left(Y=y \mid X=x\right)}{\partial w_d}  = \sum_{a_{1:K}} \prod_{k=1}^K \operatorname{Pr}_{\theta^{(t)}}\left(A_k=a_k \mid X=x\right) \cdot \frac{\partial \operatorname{Pr}_{w^{(t)}}\left(Y=y \mid A_{1:K}=a_{1:K}\right)}{\partial w_d}       \\
         &= \sum_{a_{1:K}} \prod_{k=1}^K \operatorname{Pr}_{\theta^{(t)}}\left(A_k=a_k \mid X=x\right) \cdot \frac{f_S\left(y, a_{1:K}; w^{(t)}\right)}{f_S\left(\emptyset, a_{1:K}; w^{(t)}\right)} \cdot \left[
            \frac{\partial \log f_S\left(y, a_{1:K}; w^{(t)}\right)}{\partial w_d}-\frac{\partial \log f_S\left(\emptyset, a_{1:K}; w^{(t)}\right)}{\partial w_d} \right].
\end{align*}
\end{minipage}
}
In particular, $\prod_{k=1}^K \operatorname{Pr}_{\theta^{(t)}}\left(A_k=a_k \mid X=x\right)$ and $\frac{f_S\left(y, a_{1:K}; w^{(t)}\right)}{f_S\left(\emptyset, a_{1:K}; w^{(t)}\right)}$ are provided by the attribute recognition model and the circuit, respectively.
$\frac{\partial \log f_S\left(y, a_{1:K}; w^{(t)}\right)}{\partial w_d}$ and $\frac{\partial \log f_S\left(\emptyset, a_{1:K}; w^{(t)}\right)}{\partial w_d}$ can be computed recursively within a circuit \cite{spn}.

\section{Proof for Compositional Error} \label{sec:omit}
In this section, we present a detailed proof for Theorem \ref{thm:main-theorem}. Throughout the proof, expressions with parameters such as $\Pr_\theta$, $\Pr_w$, and $\Pr_{\theta, w}$ refer to probabilities learned by the models, while those without parameters represent ground-truth probabilities.

\resizebox{5.6in}{!}{%
\begin{minipage}{\textwidth}
\begin{align}
\epsilon_{\theta, w} &= \mathbb{E}_{X}\left[ \frac{1}{2}\sum_y \left|\Pr_{\theta, w}(Y=y\mid X) - \Pr(Y=y\mid X)\right| \right] \notag\\
&\leqslant \mathbb{E}_{X}[ \frac{1}{2}\sum_y \sum_{a_{1:K}} | \prod_k\Pr_\theta(A_k=a_k\mid X)\cdot\Pr_w(Y=y\mid A_{1:K}=a_{1:K}) - \prod_k\Pr(A_k=a_k\mid X)\cdot\Pr_w(Y=y\mid A_{1:K}=a_{1:K}) \notag\\
&\phantom{\leqslant\mathbb{E}_{X}[ \frac{1}{2}\sum_y \sum_{a_{1:K}}} + \prod_k\Pr(A_k=a_k\mid X)\cdot\Pr_w(Y=y\mid A_{1:K}=a_{1:K}) - \prod_k\Pr(A_k=a_k\mid X)\cdot\Pr(Y=y\mid A_{1:K}=a_{1:K}) | ] \notag\\
&\leqslant \mathbb{E}_X\left[ \frac{1}{2}\sum_y\sum_{a_{1:K}} \left| \prod_k\Pr_\theta(A_k=a_k\mid X)-\prod_k\Pr(A_k=a_k\mid X) \right|\cdot\Pr_w(Y=y\mid A_{1:K}=a_{1:K}) \right] \label{proof:first}\\
&\phantom{\quad} + \mathbb{E}_X\left[ \frac{1}{2}\sum_y\sum_{a_{1:K}} \prod_k\Pr(A_k=a_k\mid X)\cdot\left| \Pr_w(Y=y\mid A_{1:K}=a_{1:K})-\Pr(Y=y\mid A_{1:K}=a_{1:K}) \right|\right]. \label{proof:second}
\end{align}
\end{minipage}
}

Thus, the upper bound of $\epsilon_{\theta, w}$ is decomposed into two terms.

We derive the upper bound for the first term, \ie, Eq(\ref{proof:first}), as follows,

\resizebox{5.6in}{!}{%
\begin{minipage}{\textwidth}
\begin{align*}
\text{Eq}(\ref{proof:first}) &= \mathbb{E}_X\left[ \frac{1}{2}\sum_{a_{1:K}} \left| \prod_k\Pr_\theta(A_k=a_k\mid X)-\prod_k\Pr(A_k=a_k\mid X) \right| \right] = \mathbb{E}_X\left[ d_{\text{TV}}\left( \Pr_\theta(A_{1:K}\mid X), \Pr(A_{1:K}\mid X) \right) \right] = \epsilon_\theta\\
&\leqslant \mathbb{E}_X\left[ \frac{1}{2}\sum_k\sum_{a_k}\left| \Pr_\theta(A_k=a_k\mid X)-\Pr(A_k=a_k\mid X) \right| \right] = \sum_k\mathbb{E}_X\left[ d_{\text{TV}}\left( \Pr_\theta(A_k=a_k\mid X),\Pr(A_k=a_k\mid X) \right) \right] = \sum_k\epsilon_\theta^k.
\end{align*}
\end{minipage}
}

We derive the upper bound for the second term, \ie, Eq(\ref{proof:second}), as follows,

\resizebox{5.6in}{!}{%
\begin{minipage}{\textwidth}
\begin{align*}
\text{Eq}(\ref{proof:second}) &= \frac{1}{2}\sum_x\sum_y\sum_{a_{1:K}}\Pr(X=x, A_{1:K}=a_{1:K})\cdot\left| \Pr_w(Y=y\mid A_{1:K}=a_{1:K}) - \Pr(Y=y\mid A_{1:K}=a_{1:K}) \right| \\
&= \frac{1}{2}\sum_y\sum_{a_{1:K}}\Pr(A_{1:K}=a_{1:K})\cdot\left| \Pr_w(Y=y\mid A_{1:K}=a_{1:K}) - \Pr(Y=y\mid A_{1:K}=a_{1:K}) \right| \\
&= \frac{1}{2}\sum_y\sum_{a_{1:K}} | \Pr(A_{1:K}=a_{1:K})\cdot\Pr_w(Y=y\mid A_{1:K}=a_{1:K}) - \Pr_w(A_{1:K}=a_{1:K})\cdot\Pr_w(Y=y\mid A_{1:K}=a_{1:K}) \\
&\phantom{\frac{1}{2}\sum_y\sum_{a_{1:K}}\quad} + \Pr_w(A_{1:K}=a_{1:K})\cdot\Pr_w(Y=y\mid A_{1:K}=a_{1:K}) - \Pr(A_{1:K}=a_{1:K})\cdot\Pr(Y=y\mid A_{1:K}=a_{1:K})| \\
&\leqslant \frac{1}{2}\sum_y\sum_{a_{1:K}} \Pr_w(Y=y\mid A_{1:K}=a_{1:K})\cdot\left| \Pr(A_{1:K}=a_{1:K})-\Pr_w(A_{1:K}=a_{1:K}) \right| \\
&\phantom{\quad} +\frac{1}{2}\sum_y\sum_{a_{1:K}} \left| \Pr_w(Y=y, A_{1:K}=a_{1:K})-\Pr(Y=y, A_{1:K}=a_{1:K}) \right| \\
&= \frac{1}{2}\sum_{a_{1:K}} \left| \Pr(A_{1:K}=a_{1:K})-\Pr_w(A_{1:K}=a_{1:K}) \right| + d_{\text{TV}}\left( \Pr_w(Y, A_{1:K}), \Pr(Y, A_{1:K}) \right) \\ 
&= \frac{1}{2}\sum_{a_{1:K}} \left| \sum_y\Pr(Y=y, A_{1:K}=a_{1:K})-\sum_y\Pr_w(Y=y, A_{1:K}=a_{1:K}) \right| + d_{\text{TV}}\left( \Pr_w(Y, A_{1:K}), \Pr(Y, A_{1:K}) \right) \\
&\leqslant \frac{1}{2}\sum_{a_{1:K}}\sum_y \left| \Pr(Y=y, A_{1:K}=a_{1:K})-\Pr_w(Y=y, A_{1:K}=a_{1:K}) \right| + d_{\text{TV}}\left( \Pr_w(Y, A_{1:K}), \Pr(Y, A_{1:K}) \right) \\
&= 2d_{\text{TV}}\left( \Pr_w(Y, A_{1:K}), \Pr(Y, A_{1:K}) \right) = 2\epsilon_w.
\end{align*}
\end{minipage}
}

Combining results from Eq(\ref{proof:first}) and Eq(\ref{proof:second}), we have $\epsilon_{\theta, w} \leqslant \epsilon_\theta + 2\epsilon_w \leqslant \sum_k \epsilon_\theta^k + 2\epsilon_w $.

\section{Experimental Setup} \label{sec:more_setup}
In this section, we provide additional details regarding the experimental setup.

\paragraph{Model Architectures}
To ensure a fair comparison with the various baselines, we consistently adopt ResNet-34~\citep{resnet} as the backbone for all methods. Specifically, in CBM~\citep{cbm}, the concept recognition model is based on ResNet-34, where the final layer outputs concept probabilities. The task predictor is implemented as a linear layer that takes these concept probabilities as input and outputs predicted class scores.
For CEM~\citep{cem}, the first module is implemented using ResNet-34, with its final layer being the embedding layer defined in~\citep{cem}. This embedding layer produces both concept embeddings and concept probabilities. The subsequent task predictor uses only the concept embeddings as input to produce the predicted class scores.
In DCR~\citep{dcr}, the first module is identical to that of CEM, while the task predictor is the proposed deep concept reasoner (DCR). This reasoner takes concept probabilities as input and outputs predicted class scores, during which it leverages concept embeddings to formulate logical rules.
For \methodname, the attribute recognition model is implemented using a multi-task learning framework. Specifically, ResNet-34, without its original final layer, serves as the feature extractor to capture common features across multiple attributes. For each attribute, a series of two linear layers acts as a dedicated task head, outputting a probability vector corresponding to the attribute. The task predictor in \methodname~is implemented as a probabilistic circuit, with detailed construction provided in Section \ref{sec:relation_learning}.

\paragraph{Training Details}
For the baseline methods, training is performed with a batch size of 256 over 150 epochs. We use the SGD optimizer, configured with a learning rate of 0.01, a momentum of 0.9, and a weight decay of 0.00004. During training, we reduce the learning rate by a factor of 0.1 if the validation loss plateaus for 10 epochs. Additionally, the concept loss weight is set to 1 for CBM, CEM, and DCR, while the concept embedding size is configured to be 16 for both CEM and DCR~\citep{cem}.
For \methodname, the training process consists of three stages. In the first stage, the attribute recognition model is trained with a batch size of 256 over 150 epochs using the SGD optimizer. During training, we reduce the learning rate by a factor of 0.1 if the validation loss plateaus for 10 epochs. In the second stage, the circuit is learned and optimized following the procedures described by \citet{learnspn, cccp}, respectively. In the final stage, a learning rate of 0.01 is applied to the attribute recognition model while the circuit remains frozen, which is empirically shown to yield faster training and improved task performance.
We run \methodname s and all baseline models over five random seeds: 42, 52, 62, 72, and 82.

\paragraph{Counterfactual Explanations}
We adopt Algorithm \ref{alg:ce_optimization} to generate CEs over 100 iterations. The learning rate $\gamma$ is set to 0.005 for MNIST-Addition, 0.01 for GTSRB, 0.1 for CelebA, and 0.05 for AwA2.

\section{Attribute Configurations} \label{sec:configuration}
Table \ref{tab:attribute} provides an overview of the attributes used, along with their corresponding values, for the four benchmark datasets. In MNIST-Addition and GTSRB, attributes are \textit{single-valued}, meaning each sample is associated with only one value per attribute. In contrast, CelebA and AwA2 feature \textit{multi-valued} attributes, allowing a sample to be associated with multiple values for a single attribute. For example, an image from the ``zebra'' class has both ``black'' and ``white'' as color attributes. Furthermore, attribute annotations in GTSRB, CelebA, and AwA2 are \textit{class-wise}, where each class of samples all corresponds to the same specific set of attribute values. Conversely, in MNIST-Addition, samples from the same class may have different combinations of attribute values. For instance, a sample from the class ``8'' may correspond to attribute values such as ``0, 8'', ``1, 7'', or ``8, 0''. Table \ref{tab:samples} presents sample images from the four datasets.

\begin{table}[htbp]
  \centering
  \scriptsize
  \caption{Attributes and their corresponding values for the four benchmark datasets.}
  \label{tab:attribute}
  \resizebox{5.7in}{!}{%
  \begin{tabular}{llp{8cm}}
    \hline
    \textbf{Dataset} & \textbf{Attribute} & \textbf{Values} \\
    \hline
    \multirow{2}{*}{MNIST-Addition} 
    & Number-First & 0, 1, 2, 3, 4, 5, 6, 7, 8, 9 \\
    & Number-Second & 0, 1, 2, 3, 4, 5, 6, 7, 8, 9 \\
    \hline
    \multirow{4}{*}{GTSRB} 
    & Color & Blue, Red, White \\
    & Shape & Circle, Diamond, Octagon, Triangle \\
    & Symbol & Arrow-Consecutive-Turns, Arrow-Down-Left, Arrow-Down-Right, Arrow-Left, Arrow-Right, Arrow-Roundabout, Arrow-Up, Arrow-Up-and-Left, Arrow-Up-and-Right, Bicycle, Bump, Car, Car-Truck, Car-Two, Deer, Exclamation-Mark, Ice-or-Snow, Person, Person-Two, Road-Narrows, Roadworks, Slash, Text, Traffic-Signal, Truck, Undefined \\
    & Text & 100, 120, 20, 30, 50, 60, 70, 80, Stop, Undefined \\
    \hline
    \multirow{5}{*}{CelebA} 
    & Mouth & Mouth-Slightly-Open, None, Smiling \\
    & Face & High-Cheekbones, None, Oval-Face \\
    & Cosmetic & Heavy-Makeup, None, Wearing-Lipstick \\
    & Hair & None, Wavy-Hair \\
    & Appearance & Attractive, None \\
    \hline
    \multirow{4}{*}{AwA2} 
    & Color & Black, Blue, Brown, Gray, None, Orange, Red, White, Yellow \\
    & Surface & Furry, Hairless, Patches, Spots, Stripes, Toughskin \\
    & Body & Bipedal, Bulbous, Horns, Lean, Longleg, Longneck, Quadrupedal, Tail, Tusks \\
    & Limb & Claws, Flippers, Hands, Hooves, None, Pads, Paws \\
    \hline
  \end{tabular}}
\end{table}

\begin{table}[htbp]
  \centering
  \scriptsize
  \caption{Example images and their annotations from the four benchmark datasets. The CelebA image is redacted in compliance with its terms of use.}
  \label{tab:samples}
  \resizebox{5.7in}{!}{%
  \begin{tabular}{llp{7cm}}
    \hline
    \textbf{Image} & \textbf{Attribute} & \textbf{Values} \\
    \hline
    \multirow{2}{*}{\includegraphics[width=0.08\textwidth]{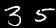}} 
    & Number-First & 3 \\
    & Number-Second & 5 \\
    \hline
    \multirow{4}{*}{\includegraphics[width=0.08\textwidth]{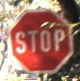}} 
    & Color & Red \\
    & Shape & Octagon \\
    & Symbol & Text \\
    & Text & Stop \\
    \hline
    \multirow{5}{*}{\includegraphics[width=0.08\textwidth]{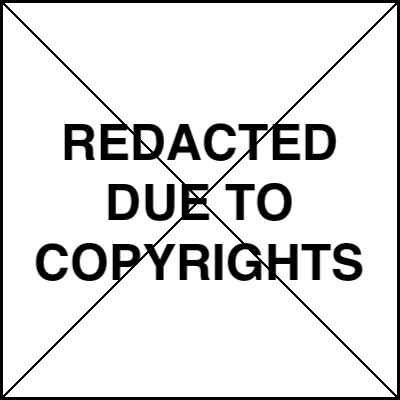}} 
    & Mouth & Mouth-Slightly-Open, Smiling \\
    & Face & High-Cheekbones \\
    & Cosmetic & Heavy-Makeup, Wearing-Lipstick \\
    & Hair & Wavy-Hair \\
    & Appearance & Attractive \\
    \hline
    \multirow{4}{*}{\includegraphics[width=0.08\textwidth]{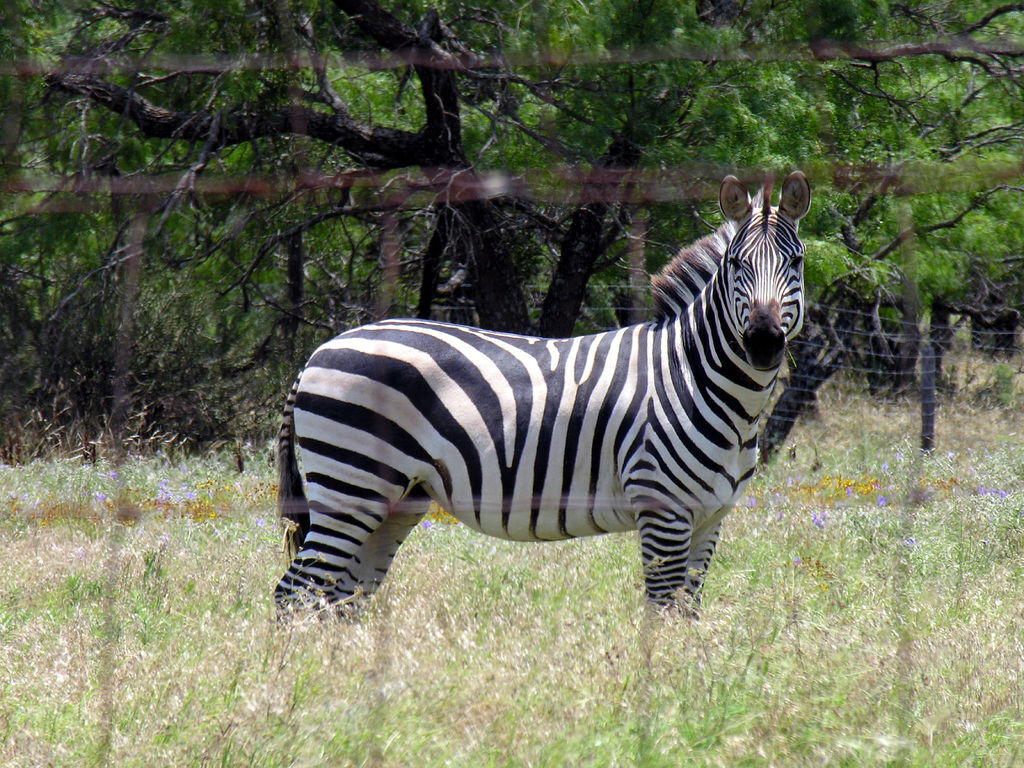}} 
    & Color & Black, White \\
    & Surface & Furry, Stripes, Toughskin \\
    & Body & Lean, Longleg, Longneck, Quadrupedal, Tail \\
    & Limb & Hooves \\
    \hline
  \end{tabular}}
\end{table}

\section{Human-Predefined Logical Rules} \label{sec:rule}
Human experience can be used to formulate specific logical rules for downstream tasks, representing valuable domain knowledge that can be integrated into models to enhance their reliability.
Here, we demonstrate a two-step procedure for constructing logical rules based on a given set of observed samples $\bar{D}=\{(x, y, a_{1:K})\}$.
1) For each observed sample $(x, y, a_{1:K})$, we establish a corresponding logical rule of the form $\mathbb{I}(A_1=a_1) \wedge \ldots \wedge \mathbb{I}(A_K=a_K) \wedge \mathbb{I}(Y=y)$.
2) The occurrences of the logical rules derived from all observed samples are then normalized to form a weight for each rule, ensuring that the rules reflect the distribution of the observed data. 
Table \ref{tab:rule} illustrates the rules constructed using training samples from the MNIST-Addition and GTSRB datasets.
In addition to the standardized procedure introduced above, humans can also leverage their expertise to incorporate more diverse and task-specific rules beyond this form.

\begin{table}[htbp]
  \centering
  \scriptsize
  \caption{Logical rules constructed using training samples from MNIST-Addition and GTSRB datasets (Part 1).}
  \label{tab:rule}%
  \resizebox{5.7in}{!}{%
    \begin{tabular}{lp{10cm}}
    \hline
    \textbf{Dataset} & \textbf{Logical Rules} \\
    \hline
    MNIST-Addition & $\mathbb{I}(\text{Number-First}=a_1) \wedge \mathbb{I}(\text{Number-Second}=a_2) \wedge \mathbb{I}(\text{Class}=a_1+a_2),~a_1,a_2\in[0,9]$ \\
    \hline
    \multirow{24}{*}{GTSRB} 
    & $\mathbb{I}(\text{Color}=\text{Red}) \wedge \mathbb{I}(\text{Shape}=\text{Circle}) \wedge \mathbb{I}(\text{Symbol}=\text{Text}) \wedge \mathbb{I}(\text{Text}=\text{20}) \wedge \mathbb{I}(\text{Class}=\text{regulatory--maximum-speed-limit-20})$ \\
    & $\mathbb{I}(\text{Color}=\text{Red}) \wedge \mathbb{I}(\text{Shape}=\text{Circle}) \wedge \mathbb{I}(\text{Symbol}=\text{Text}) \wedge \mathbb{I}(\text{Text}=\text{30}) \wedge \mathbb{I}(\text{Class}=\text{regulatory--maximum-speed-limit-30})$ \\
    & $\mathbb{I}(\text{Color}=\text{Red}) \wedge \mathbb{I}(\text{Shape}=\text{Circle}) \wedge \mathbb{I}(\text{Symbol}=\text{Text}) \wedge \mathbb{I}(\text{Text}=\text{50}) \wedge \mathbb{I}(\text{Class}=\text{regulatory--maximum-speed-limit-50})$ \\
    & $\mathbb{I}(\text{Color}=\text{Red}) \wedge \mathbb{I}(\text{Shape}=\text{Circle}) \wedge \mathbb{I}(\text{Symbol}=\text{Text}) \wedge \mathbb{I}(\text{Text}=\text{60}) \wedge \mathbb{I}(\text{Class}=\text{regulatory--maximum-speed-limit-60})$ \\
    & $\mathbb{I}(\text{Color}=\text{Red}) \wedge \mathbb{I}(\text{Shape}=\text{Circle}) \wedge \mathbb{I}(\text{Symbol}=\text{Text}) \wedge \mathbb{I}(\text{Text}=\text{70}) \wedge \mathbb{I}(\text{Class}=\text{regulatory--maximum-speed-limit-70})$ \\
    & $\mathbb{I}(\text{Color}=\text{Red}) \wedge \mathbb{I}(\text{Shape}=\text{Circle}) \wedge \mathbb{I}(\text{Symbol}=\text{Text}) \wedge \mathbb{I}(\text{Text}=\text{80}) \wedge \mathbb{I}(\text{Class}=\text{regulatory--maximum-speed-limit-80})$ \\
    & $\mathbb{I}(\text{Color}=\text{Red}) \wedge \mathbb{I}(\text{Shape}=\text{Circle}) \wedge \mathbb{I}(\text{Symbol}=\text{Text}) \wedge \mathbb{I}(\text{Text}=\text{100}) \wedge \mathbb{I}(\text{Class}=\text{regulatory--maximum-speed-limit-100})$ \\
    & $\mathbb{I}(\text{Color}=\text{Red}) \wedge \mathbb{I}(\text{Shape}=\text{Circle}) \wedge \mathbb{I}(\text{Symbol}=\text{Text}) \wedge \mathbb{I}(\text{Text}=\text{120}) \wedge \mathbb{I}(\text{Class}=\text{regulatory--maximum-speed-limit-120})$ \\
    & $\mathbb{I}(\text{Color}=\text{White}) \wedge \mathbb{I}(\text{Shape}=\text{Circle}) \wedge \mathbb{I}(\text{Symbol}=\text{Text}) \wedge \mathbb{I}(\text{Text}=\text{80}) \wedge \mathbb{I}(\text{Class}=\text{regulatory--end-of-maximum-speed-limit-80})$ \\
    & $\mathbb{I}(\text{Color}=\text{Red}) \wedge \mathbb{I}(\text{Shape}=\text{Circle}) \wedge \mathbb{I}(\text{Symbol}=\text{Cat-Two}) \wedge \mathbb{I}(\text{Text}=\text{Undefined}) \wedge \mathbb{I}(\text{Class}=\text{regulatory--no-overtaking})$ \\
    & $\mathbb{I}(\text{Color}=\text{Red}) \wedge \mathbb{I}(\text{Shape}=\text{Circle}) \wedge \mathbb{I}(\text{Symbol}=\text{Car-Truck}) \wedge \mathbb{I}(\text{Text}=\text{Undefined}) \wedge \mathbb{I}(\text{Class}=\text{regulatory--no-overtaking-by-heavy-goods-vehicles})$ \\
    & $\mathbb{I}(\text{Color}=\text{Red}) \wedge \mathbb{I}(\text{Shape}=\text{Triangle}) \wedge \mathbb{I}(\text{Symbol}=\text{Arrow-Up}) \wedge \mathbb{I}(\text{Text}=\text{Undefined}) \wedge \mathbb{I}(\text{Class}=\text{warning--crossroads})$ \\
    & $\mathbb{I}(\text{Color}=\text{White}) \wedge \mathbb{I}(\text{Shape}=\text{Diamond}) \wedge \mathbb{I}(\text{Symbol}=\text{Undefined}) \wedge \mathbb{I}(\text{Text}=\text{Undefined}) \wedge \mathbb{I}(\text{Class}=\text{regulatory--priority-road})$ \\
    & $\mathbb{I}(\text{Color}=\text{Red}) \wedge \mathbb{I}(\text{Shape}=\text{Triangle}) \wedge \mathbb{I}(\text{Symbol}=\text{Undefined}) \wedge \mathbb{I}(\text{Text}=\text{Undefined}) \wedge \mathbb{I}(\text{Class}=\text{regulatory--yield})$ \\
    & $\mathbb{I}(\text{Color}=\text{Red}) \wedge \mathbb{I}(\text{Shape}=\text{Octagon}) \wedge \mathbb{I}(\text{Symbol}=\text{Text}) \wedge \mathbb{I}(\text{Text}=\text{Stop}) \wedge \mathbb{I}(\text{Class}=\text{regulatory--stop})$ \\
    & $\mathbb{I}(\text{Color}=\text{Red}) \wedge \mathbb{I}(\text{Shape}=\text{Circle}) \wedge \mathbb{I}(\text{Symbol}=\text{Undefined}) \wedge \mathbb{I}(\text{Text}=\text{Undefined}) \wedge \mathbb{I}(\text{Class}=\text{regulatory--road-closed-to-vehicles})$ \\
    & $\mathbb{I}(\text{Color}=\text{Red}) \wedge \mathbb{I}(\text{Shape}=\text{Circle}) \wedge \mathbb{I}(\text{Symbol}=\text{Truck}) \wedge \mathbb{I}(\text{Text}=\text{Undefined}) \wedge \mathbb{I}(\text{Class}=\text{regulatory--no-heavy-goods-vehicles})$ \\
    & $\mathbb{I}(\text{Color}=\text{Red}) \wedge \mathbb{I}(\text{Shape}=\text{Circle}) \wedge \mathbb{I}(\text{Symbol}=\text{Slash}) \wedge \mathbb{I}(\text{Text}=\text{Undefined}) \wedge \mathbb{I}(\text{Class}=\text{regulatory--no-entry})$ \\
    & $\mathbb{I}(\text{Color}=\text{Red}) \wedge \mathbb{I}(\text{Shape}=\text{Triangle}) \wedge \mathbb{I}(\text{Symbol}=\text{Exclamation-Mark}) \wedge \mathbb{I}(\text{Text}=\text{Undefined}) \wedge \mathbb{I}(\text{Class}=\text{warning--other-danger})$ \\
    & $\mathbb{I}(\text{Color}=\text{Red}) \wedge \mathbb{I}(\text{Shape}=\text{Triangle}) \wedge \mathbb{I}(\text{Symbol}=\text{Arrow-Left}) \wedge \mathbb{I}(\text{Text}=\text{Undefined}) \wedge \mathbb{I}(\text{Class}=\text{warning--curve-left})$ \\
    & $\mathbb{I}(\text{Color}=\text{Red}) \wedge \mathbb{I}(\text{Shape}=\text{Triangle}) \wedge \mathbb{I}(\text{Symbol}=\text{Arrow-Right}) \wedge \mathbb{I}(\text{Text}=\text{Undefined}) \wedge \mathbb{I}(\text{Class}=\text{warning--curve-right})$ \\
    & $\mathbb{I}(\text{Color}=\text{Red}) \wedge \mathbb{I}(\text{Shape}=\text{Triangle}) \wedge \mathbb{I}(\text{Symbol}=\text{Arrow-Consecutive-Turns}) \wedge \mathbb{I}(\text{Text}=\text{Undefined}) \wedge \mathbb{I}(\text{Class}=\text{warning--double-curve-first-left})$ \\
    & $\mathbb{I}(\text{Color}=\text{Red}) \wedge \mathbb{I}(\text{Shape}=\text{Triangle}) \wedge \mathbb{I}(\text{Symbol}=\text{Bump}) \wedge \mathbb{I}(\text{Text}=\text{Undefined}) \wedge \mathbb{I}(\text{Class}=\text{warning--uneven-road})$ \\
    & $\mathbb{I}(\text{Color}=\text{Red}) \wedge \mathbb{I}(\text{Shape}=\text{Triangle}) \wedge \mathbb{I}(\text{Symbol}=\text{Car}) \wedge \mathbb{I}(\text{Text}=\text{Undefined}) \wedge \mathbb{I}(\text{Class}=\text{warning--slippery-road-surface})$ \\
    \hline
    \end{tabular}}
\end{table}%

\begin{table}[htbp]
  \centering
  \scriptsize
  \caption{Logical rules constructed using training samples from MNIST-Addition and GTSRB datasets (Part 2).}
  \label{tab:rule2}%
  \resizebox{5.7in}{!}{%
    \begin{tabular}{lp{10cm}}
    \hline
    \textbf{Dataset} & \textbf{Logical Rules} \\
    \hline
    \multirow{19}{*}{GTSRB} 
    & $\mathbb{I}(\text{Color}=\text{Red}) \wedge \mathbb{I}(\text{Shape}=\text{Triangle}) \wedge \mathbb{I}(\text{Symbol}=\text{Road-Narrows}) \wedge \mathbb{I}(\text{Text}=\text{Undefined}) \wedge \mathbb{I}(\text{Class}=\text{warning--road-narrows-right})$ \\
    & $\mathbb{I}(\text{Color}=\text{Red}) \wedge \mathbb{I}(\text{Shape}=\text{Triangle}) \wedge \mathbb{I}(\text{Symbol}=\text{Roadworks}) \wedge \mathbb{I}(\text{Text}=\text{Undefined}) \wedge \mathbb{I}(\text{Class}=\text{warning--roadworks})$ \\
    & $\mathbb{I}(\text{Color}=\text{Red}) \wedge \mathbb{I}(\text{Shape}=\text{Triangle}) \wedge \mathbb{I}(\text{Symbol}=\text{Traffic-Signal}) \wedge \mathbb{I}(\text{Text}=\text{Undefined}) \wedge \mathbb{I}(\text{Class}=\text{warning--traffic-signals})$ \\
    & $\mathbb{I}(\text{Color}=\text{Red}) \wedge \mathbb{I}(\text{Shape}=\text{Triangle}) \wedge \mathbb{I}(\text{Symbol}=\text{Person}) \wedge \mathbb{I}(\text{Text}=\text{Undefined}) \wedge \mathbb{I}(\text{Class}=\text{warning--pedestrians-crossing})$ \\
    & $\mathbb{I}(\text{Color}=\text{Red}) \wedge \mathbb{I}(\text{Shape}=\text{Triangle}) \wedge \mathbb{I}(\text{Symbol}=\text{Person-Two}) \wedge \mathbb{I}(\text{Text}=\text{Undefined}) \wedge \mathbb{I}(\text{Class}=\text{warning--children})$ \\
    & $\mathbb{I}(\text{Color}=\text{Red}) \wedge \mathbb{I}(\text{Shape}=\text{Triangle}) \wedge \mathbb{I}(\text{Symbol}=\text{Bicycle}) \wedge \mathbb{I}(\text{Text}=\text{Undefined}) \wedge \mathbb{I}(\text{Class}=\text{warning--bicycles-crossing})$ \\
    & $\mathbb{I}(\text{Color}=\text{Red}) \wedge \mathbb{I}(\text{Shape}=\text{Triangle}) \wedge \mathbb{I}(\text{Symbol}=\text{Ice-or-Snow}) \wedge \mathbb{I}(\text{Text}=\text{Undefined}) \wedge \mathbb{I}(\text{Class}=\text{warning--ice-or-snow})$ \\
    & $\mathbb{I}(\text{Color}=\text{Red}) \wedge \mathbb{I}(\text{Shape}=\text{Triangle}) \wedge \mathbb{I}(\text{Symbol}=\text{Deer}) \wedge \mathbb{I}(\text{Text}=\text{Undefined}) \wedge \mathbb{I}(\text{Class}=\text{warning--wild-animals})$ \\
    & $\mathbb{I}(\text{Color}=\text{White}) \wedge \mathbb{I}(\text{Shape}=\text{Circle}) \wedge \mathbb{I}(\text{Symbol}=\text{Undefined}) \wedge \mathbb{I}(\text{Text}=\text{Undefined}) \wedge \mathbb{I}(\text{Class}=\text{regulatory--end-of-prohibition})$ \\
    & $\mathbb{I}(\text{Color}=\text{Blue}) \wedge \mathbb{I}(\text{Shape}=\text{Circle}) \wedge \mathbb{I}(\text{Symbol}=\text{Arrow-Right}) \wedge \mathbb{I}(\text{Text}=\text{Undefined}) \wedge \mathbb{I}(\text{Class}=\text{regulatory--turn-right-ahead})$ \\
    & $\mathbb{I}(\text{Color}=\text{Blue}) \wedge \mathbb{I}(\text{Shape}=\text{Circle}) \wedge \mathbb{I}(\text{Symbol}=\text{Arrow-Left}) \wedge \mathbb{I}(\text{Text}=\text{Undefined}) \wedge \mathbb{I}(\text{Class}=\text{regulatory--turn-left-ahead})$ \\
    & $\mathbb{I}(\text{Color}=\text{Blue}) \wedge \mathbb{I}(\text{Shape}=\text{Circle}) \wedge \mathbb{I}(\text{Symbol}=\text{Arrow-Up}) \wedge \mathbb{I}(\text{Text}=\text{Undefined}) \wedge \mathbb{I}(\text{Class}=\text{regulatory--go-straight})$ \\
    & $\mathbb{I}(\text{Color}=\text{Blue}) \wedge \mathbb{I}(\text{Shape}=\text{Circle}) \wedge \mathbb{I}(\text{Symbol}=\text{Arrow-Up-and-Right}) \wedge \mathbb{I}(\text{Text}=\text{Undefined}) \wedge \mathbb{I}(\text{Class}=\text{regulatory--go-straight-or-turn-right})$ \\
    & $\mathbb{I}(\text{Color}=\text{Blue}) \wedge \mathbb{I}(\text{Shape}=\text{Circle}) \wedge \mathbb{I}(\text{Symbol}=\text{Arrow-Up-and-Left}) \wedge \mathbb{I}(\text{Text}=\text{Undefined}) \wedge \mathbb{I}(\text{Class}=\text{regulatory--go-straight-or-turn-left})$ \\
    & $\mathbb{I}(\text{Color}=\text{Blue}) \wedge \mathbb{I}(\text{Shape}=\text{Circle}) \wedge \mathbb{I}(\text{Symbol}=\text{Arrow-Down-Right}) \wedge \mathbb{I}(\text{Text}=\text{Undefined}) \wedge \mathbb{I}(\text{Class}=\text{regulatory--keep-right})$ \\
    & $\mathbb{I}(\text{Color}=\text{Blue}) \wedge \mathbb{I}(\text{Shape}=\text{Circle}) \wedge \mathbb{I}(\text{Symbol}=\text{Arrow-Down-Left}) \wedge \mathbb{I}(\text{Text}=\text{Undefined}) \wedge \mathbb{I}(\text{Class}=\text{regulatory--keep-left})$ \\
    & $\mathbb{I}(\text{Color}=\text{Blue}) \wedge \mathbb{I}(\text{Shape}=\text{Circle}) \wedge \mathbb{I}(\text{Symbol}=\text{Arrow-Roundabout}) \wedge \mathbb{I}(\text{Text}=\text{Undefined}) \wedge \mathbb{I}(\text{Class}=\text{regulatory--roundabout})$ \\
    & $\mathbb{I}(\text{Color}=\text{White}) \wedge \mathbb{I}(\text{Shape}=\text{Circle}) \wedge \mathbb{I}(\text{Symbol}=\text{Car-Two}) \wedge \mathbb{I}(\text{Text}=\text{Undefined}) \wedge \mathbb{I}(\text{Class}=\text{regulatory--end-of-no-overtaking})$ \\
    & $\mathbb{I}(\text{Color}=\text{White}) \wedge \mathbb{I}(\text{Shape}=\text{Circle}) \wedge \mathbb{I}(\text{Symbol}=\text{Car-Truck}) \wedge \mathbb{I}(\text{Text}=\text{Undefined}) \wedge \mathbb{I}(\text{Class}=\text{regulatory--end-of-no-overtaking-by-heavy-goods-vehicles})$ \\
    \hline
    \end{tabular}}
\end{table}%

\clearpage
\section{Model Explanations} \label{sec:more_explanation}
Figure \ref{fig:mpe_manual} illustrates examples for \methodname (Knowledge) from the four benchmark datasets. Each example comprises an image, the ground-truth labels for the class and attributes, the class predicted by \methodname (Knowledge), and, lastly, the corresponding MPE. In these instances, \methodname (Knowledge) provides correct class predictions, and the MPEs are aligned with the ground-truth attribute labels.

\begin{figure}[htbp]
    \centering
    \includegraphics[width=5.7in]{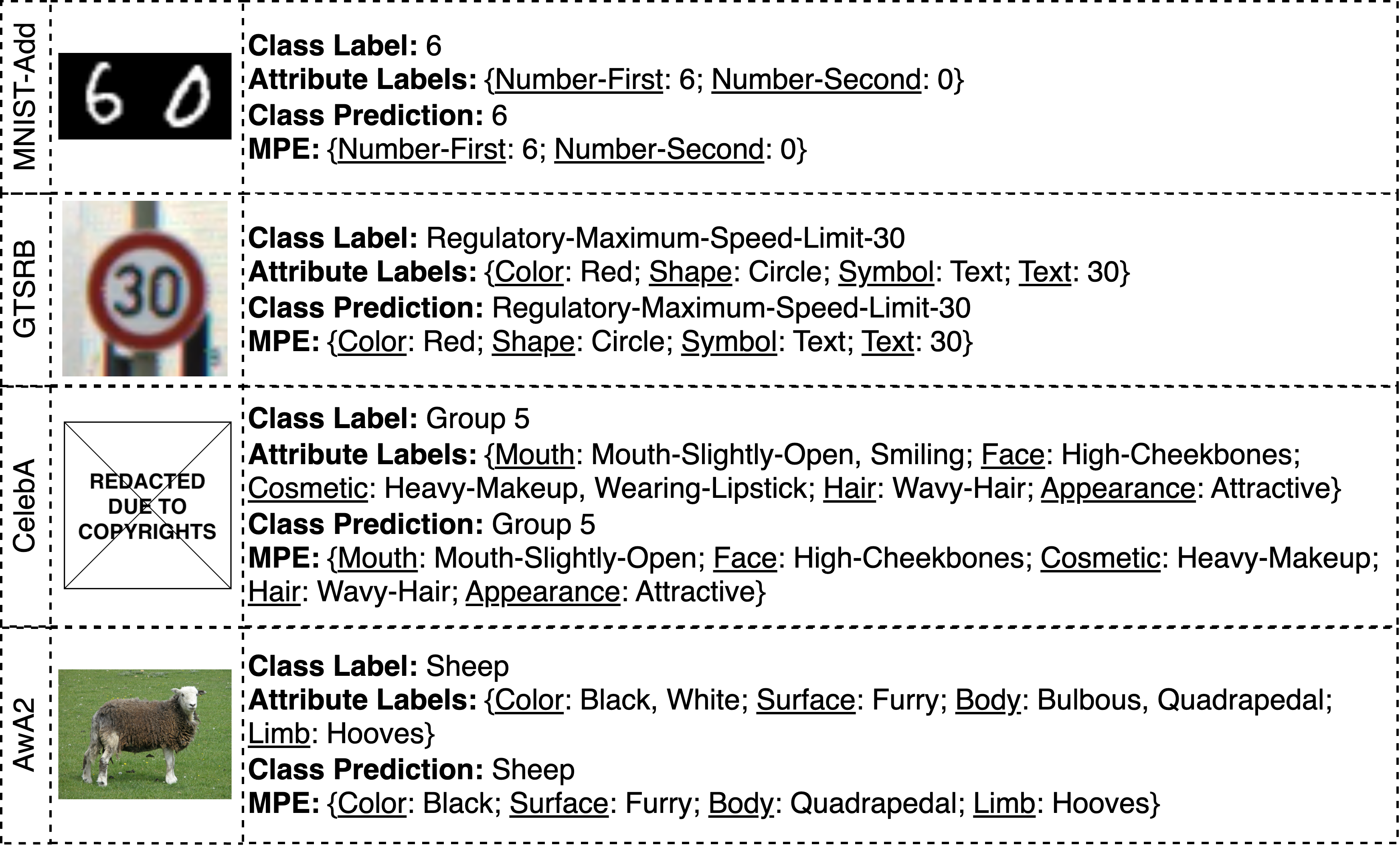}
    \caption{Examples from the four benchmark datasets. Each example includes an image, the ground-truth class label, the ground-truth attribute labels, the \methodname (Knowledge) class prediction, and, lastly, the corresponding MPE. These examples illustrate MPEs that align with the ground-truth attribute labels. The CelebA image is redacted in compliance with its terms of use.}
    \label{fig:mpe_manual}
\end{figure}

Figure \ref{fig:ce_manual} illustrates some examples for \methodname (Knowledge) from the four benchmark datasets. Each sample consists of an image, the attribute and class predictions, the generated CE, and, lastly, the class prediction corrected by the CE.
In these examples, \methodname (Knowledge) incorrectly predicts the class, while CEs effectively correct them by making minimal adjustments to the attribute predictions.

\begin{figure}[htbp]
    \centering
    \includegraphics[width=5.7in]{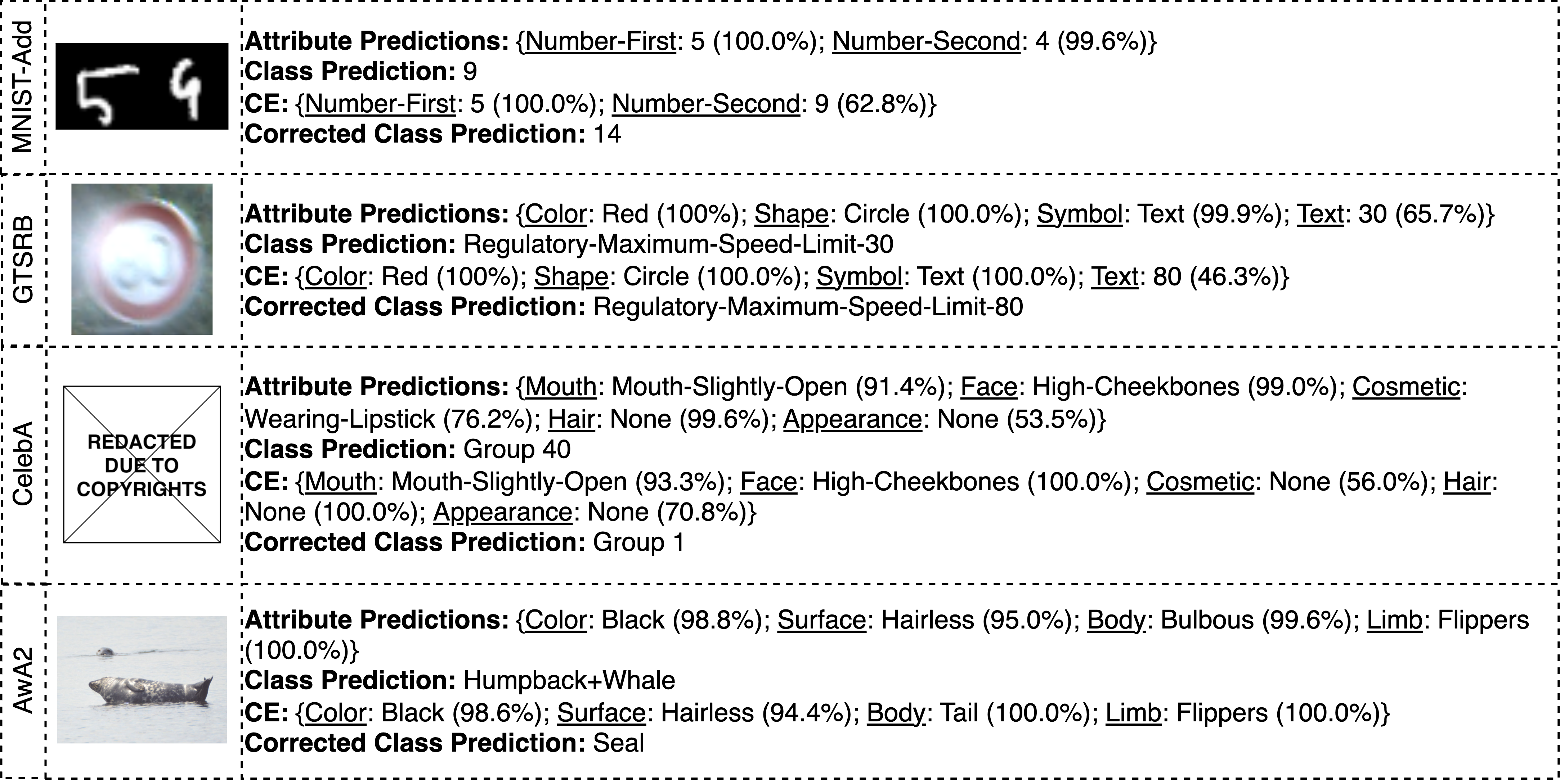}
    \caption{Examples from the four benchmark datasets. Each example includes an image, the \methodname (Knowledge) attribute and class predictions, the corresponding CE, and the class prediction corrected by CE. These examples focus on CEs that successfully correct the class predictions. For simplicity, both attribute predictions and CEs display only the most confident value for each attribute. The CelebA image is redacted in compliance with its terms of use.}
    \label{fig:ce_manual}
\end{figure}

%%=============================================%%
%% For submissions to Nature Portfolio Journals %%
%% please use the heading ``Extended Data''.   %%
%%=============================================%%

%%=============================================================%%
%% Sample for another appendix section			       %%
%%=============================================================%%

%% \section{Example of another appendix section}\label{secA2}%
%% Appendices may be used for helpful, supporting or essential material that would otherwise 
%% clutter, break up or be distracting to the text. Appendices can consist of sections, figures, 
%% tables and equations etc.

\end{appendices}

%%===========================================================================================%%
%% If you are submitting to one of the Nature Portfolio journals, using the eJP submission   %%
%% system, please include the references within the manuscript file itself. You may do this  %%
%% by copying the reference list from your .bbl file, paste it into the main manuscript .tex %%
%% file, and delete the associated \verb+\bibliography+ commands.                            %%
%%===========================================================================================%%

\end{document}